\title{Self-Explainable Graph Transformer for Link Sign Prediction}
\author{
     Lu Li, 
    Jiale Liu\, 
    Xingyu Ji, 
    Maojun Wang$^{\ast}$, 
    Zeyu Zhang\thanks{Corresponding authors:  Zeyu Zhang and Maojun Wang}
}
\newtheorem{theorem}{Theorem}
\theoremstyle{definition}
\newtheorem{definition}{Definition}
\begin{document}

\maketitle

\begin{abstract}
Signed Graph Neural Networks (SGNNs) have been shown to be effective in analyzing complex patterns in real-world situations where positive and negative links coexist. However, SGNN models suffer from poor explainability, which limit their adoptions in critical scenarios that require understanding the rationale behind predictions. To the best of our knowledge, there is currently no research work on the explainability of the SGNN models. Our goal is to address the explainability of decision-making for the downstream task of link sign prediction specific to signed graph neural networks. Since post-hoc explanations are not derived directly from the models, they may be biased and misrepresent the true explanations. Therefore, in this paper we introduce a \underline{S}elf-\underline{E}xplainable \underline{S}igned \underline{G}raph trans\underline{former} (SE-SGformer) framework, which can not only outputs explainable information while ensuring high prediction accuracy. Specifically, we propose a new Transformer architecture for signed graphs and theoretically demonstrate that using positional encoding based on signed random walks has greater expressive power than current SGNN methods and other positional encoding graph Transformer-based approaches. We construct a novel explainable decision process by discovering the $K$-nearest (farthest) positive (negative) neighbors of a node to replace the neural network-based decoder for predicting edge signs. These $K$ positive (negative) neighbors represent crucial information about the formation of positive (negative) edges between nodes and thus can serve as important explanatory information in the decision-making process. We conducted experiments on several real-world datasets to validate the effectiveness of SE-SGformer, which outperforms the state-of-the-art methods by improving 2.2\% prediction accuracy and 73.1\% explainablity accuracy in the best-case scenario. The code is available at: \url{https://github.com/liule66/SE-SGformer}.
\end{abstract}

%
\section{Introduction}
Signed Graph Neural Networks (SGNNs) are widely used for learning representations of signed graphs, as shown in Figure \ref{fig:intro}. Despite recent years have witnessed a growing interest in SGNNs with link sign prediction as the focal task \cite{derr2018signed,shu2021sgcl,zhang2023rsgnn,ni2024enhancing}, no existing study on SGNNs has addressed the issue of explainability, which hinders their adoption in crucial domains. For example, in financial networks, being able to interpret why certain transactions are flagged as suspicious can improve fraud detection and prevention. Therefore, understanding why certain relationships are classified as positive or negative can help SGNNs be more widely applied.

\begin{figure}
    \centering
    \includegraphics[width=\columnwidth]{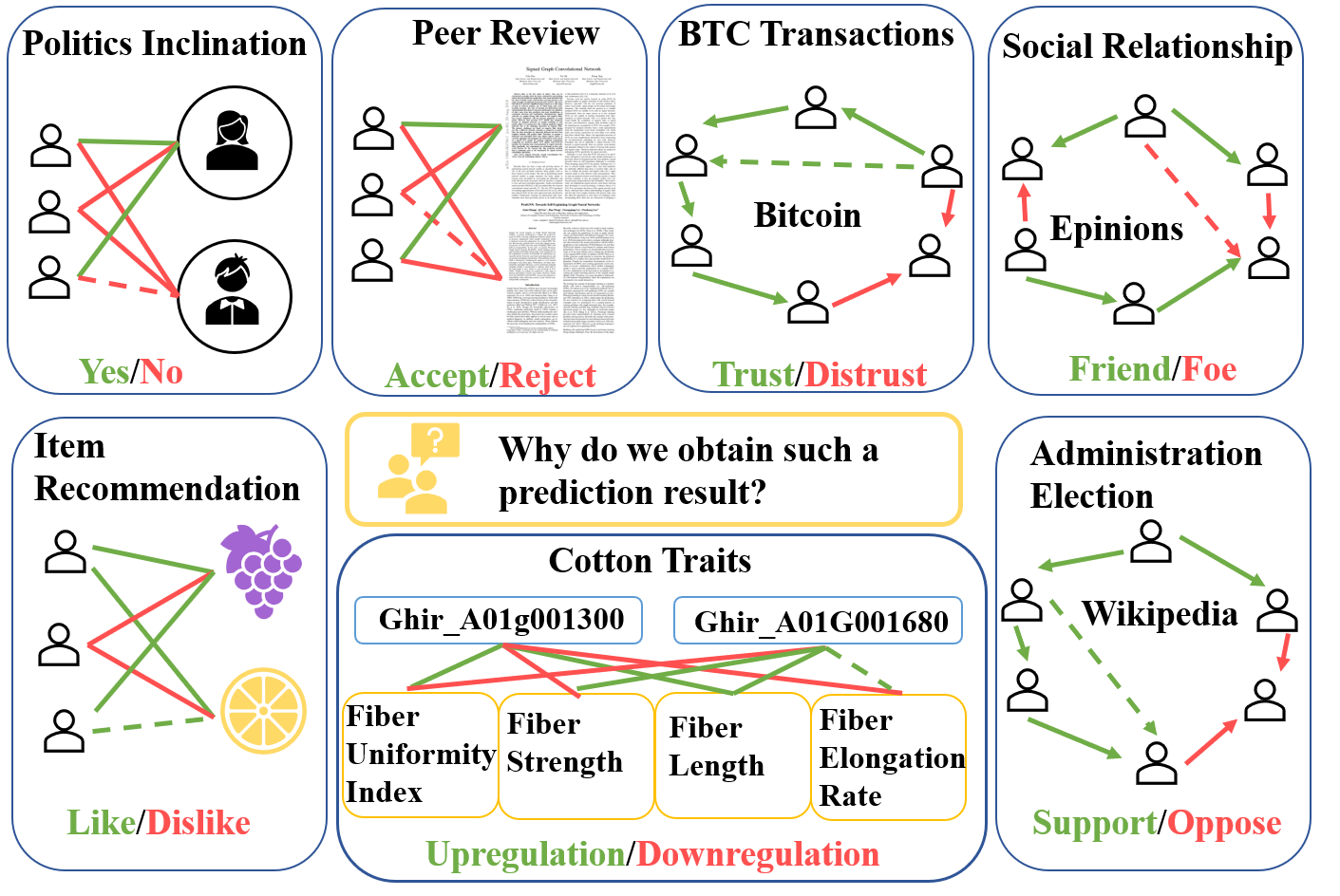}
    \caption{An illustration of signed graphs in real world. The dashed lines represent the edges to be predicted, and black and red lines represent positive and negative edges, resp.}
    \label{fig:intro}
\end{figure}

Research on GNN explainability falls into two main categories: post-hoc explanations \cite{zhang2024trustworthy,yuan2022explainability} and self-explainable approaches \cite{deng2024self,seo2024interpretable}.
Post-hoc explanation methods like GNNExplainer \cite{ying2019gnnexplainer} and PGExplainer \cite{luo2020parameterized} offer interpretations for trained GNN models, but these explanations may be biased and not truly reflective of the model. As a result, current research is shifting toward self-explaining methods, where models generate explanations alongside predictions. For example, SE-GNN \cite{dai2021towards} uses $K$-nearest labeled nodes for explainable node classification, while ProtGNN \cite{zhang2022protgnn} integrates prototype learning to enhance interpretability. While effective, these methods are primarily designed for unsigned graphs with graph or node classification tasks, making them unsuitable for SGNNs focusing on link sign prediction. This highlights the need for a new explainable framework for signed graph representation learning that can offer both predictions and explanations.

One potential approach for obtaining self-explanations in link sign prediction is to identify explainable $K$-nearest positive neighbors (and $K$-farthest negative neighbors) for each node. If the closeness between the two nodes of an edge $e_{ij}$ is similar to the closeness between node $v_{i}$ and its $K$-nearest positive (or $K$-farthest negative) neighbors, the predicted sign of the edge is positive (or negative). The identified $K$-nearest positive (or $K$-farthest negative) neighbors then serve as the explanatory information. This approach simultaneously provides a prediction result and an explanation for that prediction.

The key to identifying the $K$-nearest (farthest) positive (negative) neighbors for each node lies in learning proper node representations and selecting an adequate number of positive and negative neighbors for any given node. The challenges can be divided into the two aspects:

\textit{Challenge 1} (Encoding part): How to improve the quality of node representations. To prevent overfitting, current SGNN models limit networks to three layers or fewer, restricting their ability to capture multi-hop information \cite{derr2018signed}. Meanwhile, GCN-based SGNN frameworks cannot learn proper representations from unbalanced cycles \cite{zhang2023rsgnn}.

\textit{Challenge 2} (Explanation part): How to find a sufficient number of negative neighbors for nodes. As is shown in Figure \ref{fig:neg-degree}, a significant proportion of nodes have few or even no negative neighbors. For example, in the Bitcoin-Alpha dataset, over 80\% of the nodes have no negative neighbors.


\begin{figure}
    \centering
    \includegraphics[width=0.8\linewidth]{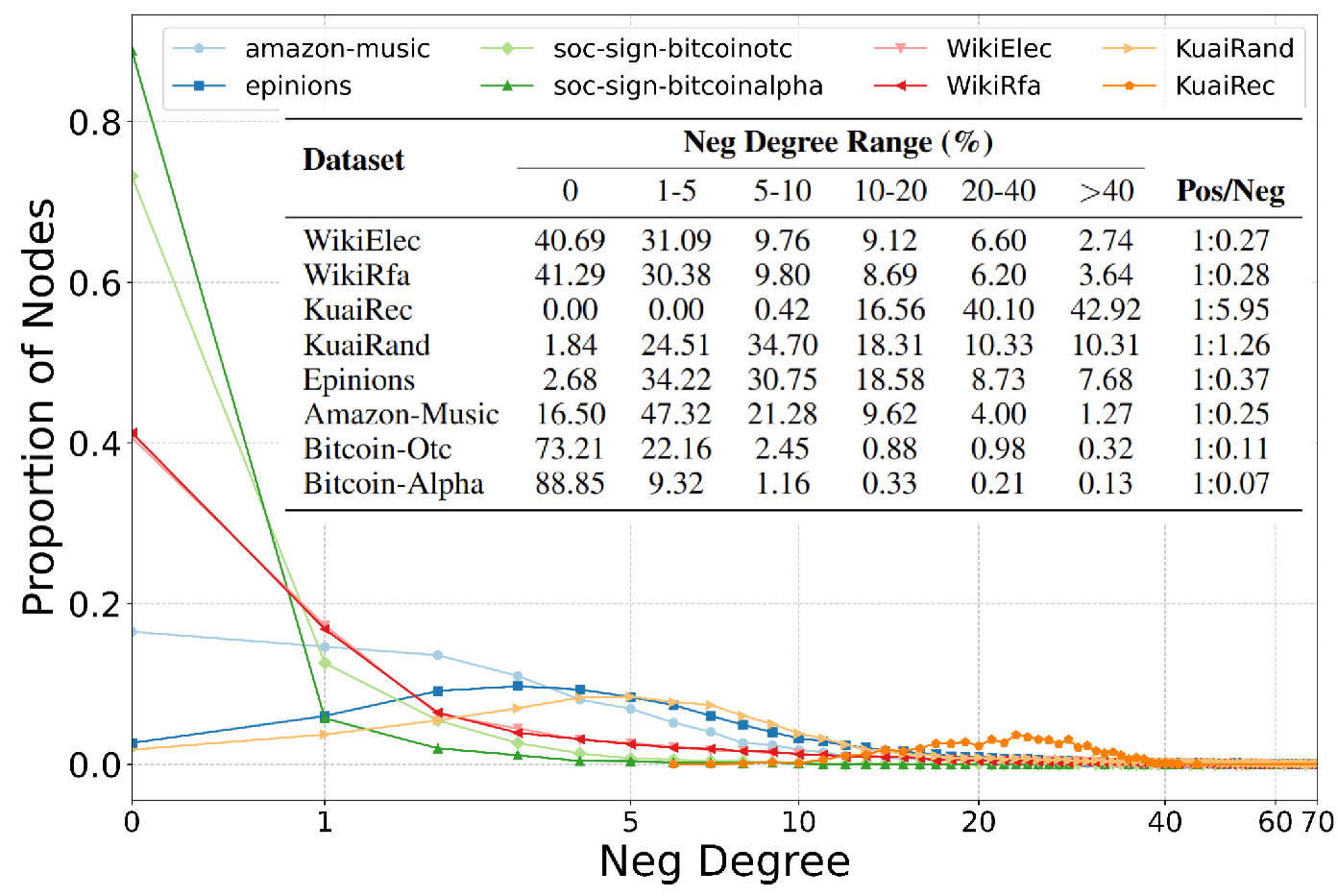}
    \caption{Proportion of node negative degrees and Pos/Neg Ratios across datasets (\%)}
    \label{fig:neg-degree}
\end{figure}


For \underline{Challenge 1}, we design a novel graph Transformer architecture specifically for signed graphs (see \textbf{Encoding Module} section). Specifically, we use signed random walk encoding to help capture multi-hop neighbor information. We theoretically demonstrate that this encoding method has a stronger representation power compared to current SGNNs and other common encoding methods, such as the shortest path encoding.


For \underline{Challenge 2}, we employ signed diffusion matrix based on Signed Random Walk with Restart (SRWR) algorithm \cite{jung2016personalized} (see \textbf{Explainable Prediction Module} section) to uncover potential negative relationships among nodes. 

Overall, we propose a novel Self-Explainable Signed Graph Transformer (SE-SGformer) framework which utilizes the Transformer architecture as an encoder for the signed graph, predicting edge signs by identifying the $K$-nearest (farthest) positive (negative) neighbors and providing corresponding explanatory information. We conduct experiments on eight real-world datasets to validate the effectiveness of SE-SGformer, which surpasses state-of-the-art methods by achieving a 2.2\% increase in prediction accuracy and a 73.1\% improvement in explainability accuracy under optimal conditions. Related work can be found in Appendix. Our contributions are summarized as follows:


\begin{itemize}
    \item We first propose a self-explainable model SE-SGformer for signed graphs, which is specifically designed for the link sign prediction task and can provide explanations along with the predictions.
    \item We propose a novel graph Transformer-based representation learning model for signed graphs and theoretically prove that our random walk encoding is  more powerful than the shortest path encoding and our Transformer architecture with random walk encoding is more powerful than SGCN.
    \item We conduct extensive experiments on several real-world datasets
    and prove that SE-SGformer achieve performance comparable to state-of-the-art models and achieve good explanatory accuracy.
\end{itemize}

\section{Problem Definition}

Let $\mathcal{G}=\left(\mathcal{V}, \mathcal{E}^{+}, \mathcal{E}^{-}\right)$ be a signed network, where $\mathcal{V}=\left\{v_{1}, v_{2}, \ldots v_{|\mathcal{V}|}\right\}$ represents the set of $|\mathcal{V}|$ nodes, while $\mathcal{E}^{+}\subset\mathcal{V}\times\mathcal{V}$ and $\mathcal{E}^{-}\subset\mathcal{V}\times\mathcal{V}$ denote the sets of positive and negative links, respectively. For each edge $e_{ij} \in \mathcal{E}^+ \cup \mathcal{E}^-$ connecting two nodes $v_i$ and $v_j$, the edge can be either positive or negative, but not both, implying $\mathcal{E}^+ \cap \mathcal{E}^- = \emptyset$. The graph structure can be described by an adjacency matrix
$A \in\mathbb{R}^{|\mathcal{V}|\times |\mathcal{V}|}$, where $A_{ij} = 1$ means that there exists a positive link from $v_{i}$ to $v_{j}$ , $A_{ij} = -1$ denotes a negative link, and $A_{ij} = 0$ otherwise (meaning that there is no link from $v_{i}$ to $v_{j}$). Note that real-world signed graph datasets typically do not provide node features. Therefore, there is no feature vector $x_i$ associated with each node $v_i$.

The goal of a SGNN is to learn an embedding function \textit{} $f_{\theta}: \mathcal{V} \rightarrow \mathcal{Z}$, which maps the nodes of a signed graph to a latent vector space $\mathcal{Z}$. In this space, $f_{\theta}(v_i)$ and $f_{\theta}(v_j)$ are close if $e_{ij} \in \mathcal{E}^+$ and distant if $e_{ij} \in \mathcal{E}^-$. Furthermore, we adopt \textit{ link sign prediction} as the downstream task for SGNN, following the mainstream studies. This task aims to infer the sign of a link given the nodes $v_i$ and $v_j$. The link sign prediction can be explained as follows: (1) for a node pair $(v_i, v_j)$ connected by a directed positive edge, taking node $v_i$ as an example, $v_i$'s $K$-nearest positive neighbors have a higher similarity score with respect to $v_j$; (2) for a node pair $(v_i, v_j)$ connected by a directed negative edge, taking node $v_i$ as an example, $v_i$'s $K$-farthest negative neighbors have a higher similarity score with respect to $v_j$. With the aforementioned notations, we formualte our Self-explainable link sign prediction problem as:

Given a signed graph $\mathcal{G}$, the task is to learn the SGNN parameters $\theta$ while simultaneously \textit{predicting link signs} and \textit{generating explanations} by identifying the set of $K$-nearest (farthest) positive (negative) neighbors.

\begin{figure*}
    \centering
    \includegraphics[width=0.94\linewidth]{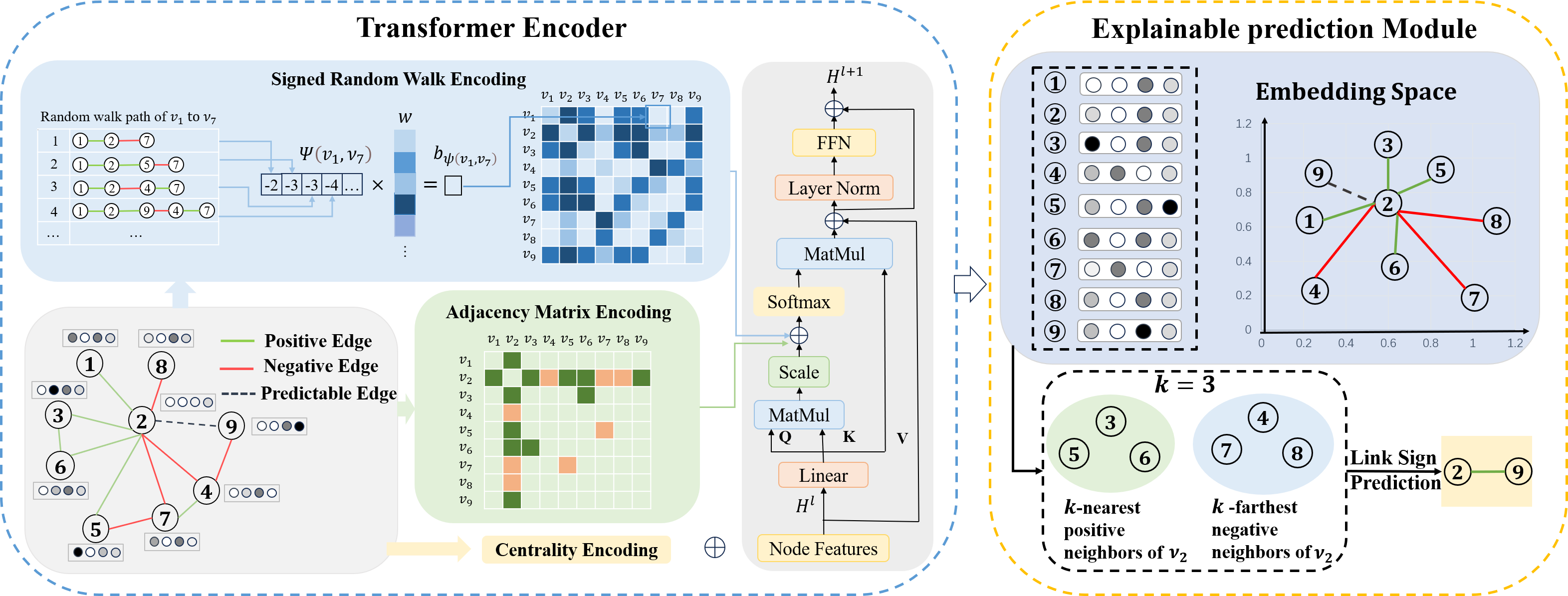}
    \caption{The overall architecture of SE-SGformer. Firstly, the Transformer encoder, equipped with centrality encoding, adjacency matrix encoding, and random walk encoding, obtain node embeddings. Next, in the explainable prediction module, the $K$-nearest positive (and $K$-farthest negative) neighbors of the nodes are identified and used to predict the unknown link sign.}
    \label{fig:framework}
\end{figure*}

\section{Proposed Method}
In this section, we introduce the details of the proposed framework SE-SGformer. The basic idea of SE-SGformer is to first use a Transformer to encode the signed graph. Then, for an edge with an unknown sign from node $v_i$ to node $v_j$, we identify the $K$-nearest (farthest) positive (negative) neighbors of $v_i$. The sign of the edge $e_{ij}$ is determined based on the similarity between node $v_j$ and the positive or negative neighbors of node $v_i$, with explanatory information provided simultaneously. The overview of our proposed SE-SGformer is shown in Figure \ref{fig:framework}. It is mainly divided into two parts: encoding module and explainable prediction module.


\subsection{Encoding Module} 
The classical Transformer architecture \cite{vaswani2017attention} is composed of self-attention modules and feed-forward neural networks. In the self-attention module, the input $H$ $\in\mathbb{R}^{n\times d}$ are projected to the corresponding
query $Q$, key $K$, and value $V$. The self-attention is then calculated as:
\begin{equation}\footnotesize
    Q=HW_Q,\quad K=HW_K,\quad V=HW_V,
\end{equation}
\begin{equation}\footnotesize
    \tilde{A} = \frac{{QK}^T}{\sqrt{d_K}}, \mathrm{Attn}(H)=\mathrm{softmax}(\tilde{A})V
\end{equation}

where $\tilde{A}$ denotes the matrix capturing the similarity between queries and keys. $W_{Q} \in \mathbb{R}^{d \times d_{K}}, W_{K} \in \mathbb{R}^{d \times d_{K}}, W_{V} \in \mathbb{R}^{d \times d_{V}}$
denote the
projected matrices of query, key and value, respectively.

For graph representation, incorporating structural information of graphs into Transformer models is crucial. Therefore, We introduce three new encodings. The specific details are as follows:

\textit{Centrality Encoding}. Degree centrality measures the influence of a node in a graph, where higher degrees indicate potentially greater influence. However, these insights are often overlooked in attention computations. Therefore, we design a centrality encoding tailored for signed graphs to reflect the importance of each node. To be specific, we assign two real-valued embedding vectors to each node based on its positive degree and negative degree. We directly add the centrality encoding of each node to its original features $x_{i}$ to form the new node embeddings:

\begin{equation}\footnotesize
    h_i^{(0)}=x_i+c_{\deg^-(v_i)}^-+c_{\deg^+(v_i)}^+,
    \label{eq:centrality}
\end{equation}

where $c^-, c^+\in\mathbb{R}^d$ are learnable embedding vectors determined by the negative degree $\operatorname{deg}^-(v_i)$ and positive degree $\operatorname{deg}^+(v_i)$, respectively. 

\textit{Adjacency Matrix Encoding.} Unlike the Transformer's approach to sequential data, where specific positional encodings are added to each word to indicate its position in a sentence, nodes in a graph are not arranged in a sequence. To encode the structural information of a graph in the model, we introduce the adjacency matrix encoding. $A$ is the adjacency matrix of the graph, characterizing their direct positive and negative neighbors. After normalizing the adjacency matrix, we obtain the adjacency matrix encoding $\hat{A}$ which serves as a bias term in the self-attention module. Denote $\tilde{A}_{ij}$ as the $(i, j)$-element of the Query-Key product matrix $\tilde{A}$, we have:
\begin{equation}\footnotesize
    \tilde{A}_{ij}=\frac{(h_iW_Q)(h_jW_K)^T}{\sqrt{d}}+\hat{A}_{ij},
    \label{eq:adjacency}
\end{equation}

\textit{Signed Random Walk Encoding.} Adjacency matrix encoding can only capture the relationships between nodes and their one-hop neighbors, but it cannot represent the relationships between nodes and their multihop neighbors. Previous works introduce spatial encoding to capture the relationships between a node and its multi-hop neighbors, such as the shortest path encoding \cite{ying2021Transformers}. However, the shortest-path encoding approach only considers a single path, without taking alternative routes into account. Inspired by \cite{yeh2023random}, we introduce the \textit{signed random walk encoding} to exploit the relative position between the nodes and their high-order neighbors by multiple paths from the signed random walk. Generally, we perform multiple random walks on a signed graph $\mathcal{G}$, obtaining multiple random walk sequences $Q=\{q_{i}\}_{i=1}^{l}$ of length $l$, where $q_{i} \in \mathcal{V}$. The random walk sequence in the graph starts from an initial node $q_{i}$, and the next node $q_{i+1}$ is randomly sampled from its neighbors $\mathcal{N}(q_i)$. Additionally, to discover long-range patterns, we use a non-backtracking approach where the previous node is excluded when sampling the next node, unless it is the only neighbor available, ensuring that each node's predecessor and successor are distinct. We define a function $\sigma: e_{ij} \rightarrow \{ 1, -1 \}$. Given a random walk sequence $Q$, the signed random walk distance $\psi(v_{i}, v_{j})$ is defined as follows:
\begin{equation}\footnotesize
    \psi(v_i,v_j)=\\\begin{cases}\Big\{(\prod_{u=n}^{m-1} \sigma (q_u, q_{u+1}))\cdot |m-n|\Big|   \\
    q_m=v_i\wedge q_n=v_j\Big\},&\text{if} v_i,v_j\in Q\\m+1,&\text{otherwise}\end{cases}
\end{equation}

where $m$ denotes the maximum path length between two nodes. If the path length between two nodes is $m+1$, indicating that the nodes are unreachable from each other. So after performing $r$ random walks on graph G, each node pair can obtain $r$ random walk distances and the random walk encoding is defined as follows:
\begin{equation}\footnotesize
    b_{\psi(v_i,v_j)} = w_{r}\cdot\frac{1}{\psi^r(v_i,v_j)}
    \label{eq:random}
\end{equation}
where $w_{r}$ is a learnable parameter that represents the weight of the result obtained from the $r$-th random walk.
Then we modify the $(i, j)$-element of $\tilde{A}$ further with the signed random walk encoding $b$ as:
\begin{equation}\footnotesize
    \tilde{A}_{ij}=\frac{(h_iW_Q)(h_jW_K)^T}{\sqrt{d}}+\hat{A}_{ij}+ b_{\psi(v_i,v_j)}
    \label{eq:attention}
\end{equation}  

Next, following the proof idea of \cite{yeh2023random}, we theoretically analyzed the expressive power of our signed random walk encoding compared to shortest path encoding. Also, we  verified this through experiments and the results can be found in the Appendix. The shortest path encoding for signed graphs calculates the distance between two nodes and, using balance theory, assigns a positive or negative relationship to the path.  
\begin{definition}[Signed graph isomorphism]
Two signed graphs $G_1$ and $G_2$ are isomorphic, if there exists a bijection $\phi$ : $\mathcal{V}_{\mathcal{G}_1}\to\mathcal{V}_{\mathcal{G}_2}$,  for every pair of nodes $v_1, v_2 \in V_{\mathcal{G}_1}$, $e_{ij} \in {\mathcal{E}_1}$, if and only if $e_{\phi(v_i),\phi(v_j)}\in\mathcal{E}_2$ and $\sigma(e_{ij})=\sigma(e_{\phi(v_i),\phi(v_j)}) $.
\end{definition}   

\begin{theorem}
\label{the:1}
With a sufficient number of random walks, signed random walk encoding is more expressive than that based on a fixed shortest path for signed graph.
\end{theorem}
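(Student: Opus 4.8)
The plan is to read ``more expressive'' in the standard injectivity sense and split the claim into two halves: (i) the signed random walk encoding is \emph{at least as expressive} as the fixed shortest path encoding, meaning that any two signed graphs (or local node-pair structures) that the shortest path encoding separates are also separated by the random walk encoding; and (ii) this domination is \emph{strict}, i.e.\ there exist non-isomorphic signed graphs on which the shortest path encoding agrees while the random walk encoding disagrees. Proving both halves shows that the random walk encoding refines, and strictly dominates, the shortest path encoding. This mirrors the argument structure of \cite{yeh2023random}, adapted to the signed setting through the sign function $\sigma$ and balance theory.

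For part (i) I would show that the shortest path encoding is a \emph{function of} the random walk encoding, so equal random walk encodings force equal shortest path encodings (the contrapositive of ``at least as expressive''). The key observation is that every element of the set $\psi(v_i,v_j)$ records a signed path length $(\prod_{u}\sigma(q_u,q_{u+1}))\cdot|m-n|$ realized by an actual walk between $v_i$ and $v_j$. Under the ``sufficient number of random walks'' hypothesis I would argue that every simple path between a reachable pair $(v_i,v_j)$ — in particular a shortest one — is eventually sampled, so that $\min|m-n|$ over the entries of $\psi(v_i,v_j)$ equals the true shortest path length, and the accompanying sign $\prod_u\sigma(q_u,q_{u+1})$ coincides with the balance-theoretic sign that the shortest path encoding assigns. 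Hence the shortest path distance and its sign can be read off from the random walk statistics, which yields the ``at least as expressive'' direction.

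For part (ii) I would construct two small non-isomorphic signed graphs that collapse under the shortest path encoding but separate under the random walk encoding. The construction exploits the fact that a \emph{fixed} shortest path retains only a single route: I would join two nodes $v_i,v_j$ by several paths of equal length, and in $G_1$ make the sign products $\prod_u\sigma(q_u,q_{u+1})$ along these paths all agree (a balanced configuration), while in $G_2$ make them disagree (an unbalanced configuration), without changing the shortest distance or the sign of the chosen shortest representative. The shortest path encoding, seeing one path and its sign, then returns identical values on $G_1$ and $G_2$, whereas the random walk encoding, aggregating over \emph{multiple} paths via $w_r\cdot\frac{1}{\psi^r(v_i,v_j)}$ in Eq.~\eqref{eq:random}, produces different aggregated values because the underlying multiset of signed distances differs. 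I would close part (ii) by checking that $G_1$ and $G_2$ are not isomorphic under the Signed graph isomorphism definition.

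The main obstacle is making the ``sufficient number of random walks'' hypothesis precise and reconciling it with the stochastic, finite-length, non-backtracking walks the model actually performs: I must argue, in the limit of many walks or with high probability, that a shortest path is guaranteed to appear among the sampled walks so its length and sign are recoverable, while controlling the effect of the walk-length cutoff $l$ and the ``$m+1$'' default assigned to unreachable pairs. A secondary technical point is ensuring that the part-(ii) counterexample keeps the shortest path encoding \emph{exactly} fixed — same distance and same balance sign — across $G_1$ and $G_2$, so that the separation is attributable solely to the extra multi-path information captured by the random walk encoding rather than to any incidental change in the shortest path statistics.
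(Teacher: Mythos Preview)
Your two–part decomposition --- (i) random walk encoding dominates shortest path encoding, (ii) strict domination via a counterexample --- is exactly the paper's structure, and your argument for part (i) is essentially the paper's: with enough walks the multiset $\mathcal{S}_{i,j}$ of sampled signed distances contains the signed shortest distance, so identical random walk encodings force identical shortest path encodings.

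The gap is in part (ii). Your proposed counterexample flips edge signs along parallel $v_i$--$v_j$ paths so that the sign products disagree in $G_2$ while the ``chosen'' shortest representative between $v_i$ and $v_j$ stays fixed. The trouble is that the shortest path encoding is computed for \emph{every} node pair, not just $(v_i,v_j)$: each edge whose sign you flip is itself the unique shortest path (of length~$1$) between its own endpoints, so that adjacent pair's shortest path encoding changes immediately. Hence the global shortest path profiles of $G_1$ and $G_2$ will differ, and the shortest path encoding \emph{will} already separate them --- defeating the counterexample. You flag this as a ``secondary technical point,'' but it is actually fatal to the sign-flip construction as stated.

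The paper sidesteps this by using a \emph{topological} rather than a sign-based difference: it exhibits two $6$-vertex signed graphs in which the multiset of signed shortest distances from every vertex coincides (so the shortest path encoding cannot tell them apart), yet one graph contains $3$-cycles and the other does not. A non-backtracking walk can return to its start in $3$ steps in the first graph and never in the second, so the random walk encodings differ. To repair your part (ii) you need either to reproduce such a cycle-structure example, or to find a sign reassignment that globally preserves every node's multiset of signed shortest distances --- a far more delicate requirement than fixing a single pair's shortest path.
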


\begin{proof}
We demonstrate our proof with two cases: 1) If two signed graphs are isomorphic under signed random walk encoding, they are also isomorphic under shortest path encoding. 2) In some cases, distinct signed graphs identified as non-isomorphic by signed random walk encoding are indistinguishable by shortest path encoding.

In case 1, given a node pair ($v_{i}$, $v_{j}$), the signed random walk distance varies depending on different random walk samples. Let  $\mathcal{S}_{i,j}=\{s_1,...,s_r\}$ denote the set of $r$ possible distances between $v_{i}$ and $v_{j}$ sampled by different signed random walk. With a sufficient number of random walks, the shortest distance with a positive or negative sign can be explored in $S_{i, j}$. If two graphs $G_1$ and $G_2$ are identified as isomorphic by signed random walk encoding, then every node pair ($v_i^1$, $v_j^1$) in $G_1$ can find a corresponding node pair ($v_i^2$, $v_j^2$) in $G_2$ and $\sigma(e_{v_i^1,v_j^1})=\sigma(e_{v_i^2,v_j^2})$ so corresponding $S_{i,j}^1$, $S_{i,j}^2$ are identical. Therefore, the shortest distances of the two nodes are also the same, $G_1$ and $G_2$ can also be identified as isomorphic by the shortest path encoding.

As illustrated in case 2 (Figure \ref{fig:non-isomorphic}), the structures of the two graphs are different. However, the shortest path encoding cannot identify these two graphs as non-isomorphic. In the left graph, the shortest path distances from node $v_1$ and node $v_4$ to the other nodes are $\{-1, -1, 1, -2, -2\}$ and the shortest path distances from the remaining nodes to all other nodes are $\{-1, 1, 1, 2, -2\}$. For $v_1$ and $v_4$ in the right graph, the  shortest path distances to the other nodes are also $\{-1, -1, 1, -2, -2\}$ and the shortest path distances to the other nodes are also $\{-1, 1, 1, 2, -2\}$ for the remaining nodes. Therefore, the shortest path encoding views them as isomorphic. In contrast, signed random walk encoding regards two graphs as non-isomorphic. For example, one can revisit the node in 3 steps in the left graph, while it is impossible on the right graph. Therefore, signed random walk encoding is more powerful than that based on a fixed shortest path. 
\end{proof}

\begin{figure}
    \centering
    \includegraphics[width=0.8\linewidth]{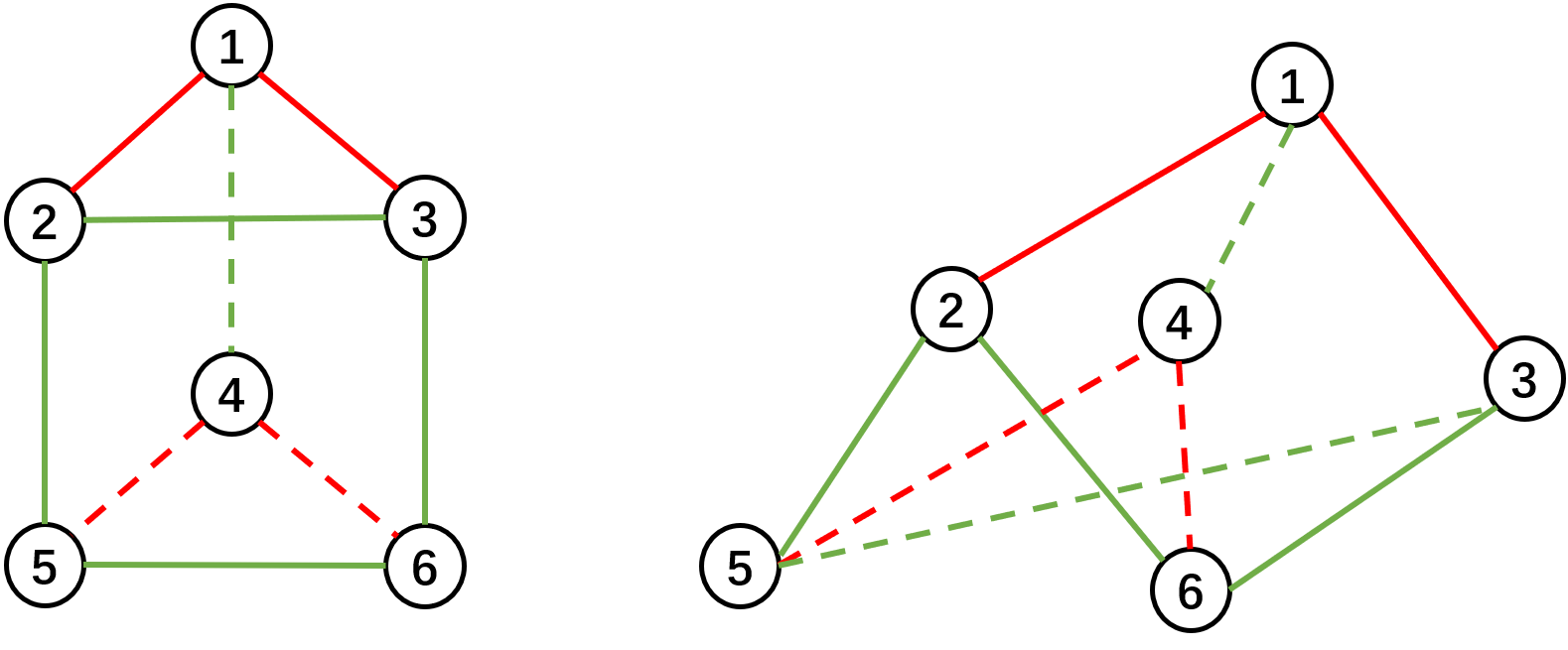}
    \caption{Encoding based on the shortest path cannot map
these two graphs to different embeddings, while encoding
based on the signed random walk can map them to different embeddings.}
    \label{fig:non-isomorphic}
\end{figure}

For the issues present in Challenge 1, we further analyzed the expressive power of our graph transformer architecture based on signed random walk encoding and SGCN-based SGNN models. Our conclusion is as follow:
\begin{theorem}(Proof in Appendix)
\label{the:2}
With a sufficient number of random walk length, the graph transformer architecture based on signed random walk encoding is more expressive than SGCN.
\end{theorem}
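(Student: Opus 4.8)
The plan is to mirror the two-case structure used in the proof of Theorem~\ref{the:1}: first establish that the random-walk-encoded Transformer is \emph{at least} as expressive as SGCN, in the sense that it distinguishes every pair of non-isomorphic signed graphs that SGCN distinguishes, and then exhibit a pair of signed graphs that the Transformer separates but SGCN maps to identical representations. Throughout, ``more expressive'' is taken in the usual sense that the set of non-isomorphic signed-graph pairs distinguishable by the Transformer strictly contains the set distinguishable by SGCN.

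First I would handle the ``at least as expressive'' direction. The key observation is that SGCN is a bounded-depth message-passing model whose aggregation is governed by balance theory: after $L$ layers (with $L$ small, typically $\le 3$ to avoid overfitting, as noted in Challenge~1), each node embedding is a function only of the balanced/unbalanced walk information within its $L$-hop neighborhood, where a walk is classified by the sign product $\prod_u \sigma(q_u,q_{u+1})$ along it. I would argue that the signed random walk distance $\psi(v_i,v_j)$ records exactly this sign product together with the path length, so that with random walk length at least $L$ the encoding $b_{\psi(v_i,v_j)}$ captures every balanced/unbalanced neighborhood statistic SGCN can access. Arguing by contraposition, if the Transformer's encodings (including the $\psi$-distributions and the adjacency term) coincide on two graphs, then their balanced/unbalanced walk statistics agree up to every length, in particular up to length $L$, so SGCN produces identical embeddings; equivalently, whenever SGCN separates two graphs the attention bias in Eq.~\eqref{eq:attention} must already differ, so the Transformer separates them too.

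Next I would establish strict improvement by constructing a counterexample, reusing the idea behind Challenge~1 that GCN-style signed aggregation cannot resolve unbalanced cycles. Concretely, I would build two signed graphs that are indistinguishable within SGCN's receptive field---identical positive/negative degree sequences and identical short-range balanced/unbalanced neighborhoods---but that differ in the sign product around a longer cycle (one balanced, one unbalanced), with the cycle length chosen to exceed $2L+1$ so that no node's $L$-hop neighborhood sees the imbalance. SGCN, being depth-limited and balance-driven, then produces identical node embeddings on the two graphs, whereas a random walk long enough to traverse the cycle yields different sign products, hence different $\psi$ distributions and different biases $b_{\psi(v_i,v_j)}$, so the Transformer distinguishes them. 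This is directly analogous to Case~2 of Theorem~\ref{the:1}, where a three-step revisiting walk separated two graphs that shortest-path encoding could not.

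The main obstacle is the ``at least as expressive'' direction rather than the counterexample. Matching SGCN's specific balanced/unbalanced aggregation to the information content of $b_{\psi(v_i,v_j)}$ requires care: I must quantify ``sufficient random walk length'' relative to SGCN's depth $L$, and I must justify that distinct balanced/unbalanced walk statistics induce distinct aggregated biases---essentially an injectivity argument for the map from $\psi$-distributions to attention biases under suitable choices of the learnable weights $w_r$. The global nature of Transformer attention also means I cannot simply imitate message passing layer by layer; instead I would rely on the encoding capturing the relevant walk statistics and on the softmax attention being expressive enough to read them out.
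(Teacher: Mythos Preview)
Your two-case skeleton matches the paper's, but the paper routes the argument through a key intermediate object you do not use: the \emph{Extended WL-test for signed graphs} of \cite{zhang2023rsgnn}. Rather than comparing the Transformer to SGCN directly, the paper invokes two external results---that the Extended WL-test upper-bounds SGCN's distinguishing power, and that positional encodings determine a graph Transformer's distinguishing power via a corresponding SEG-WL test \cite{zhu2023structural}---to reduce Theorem~\ref{the:2} to a purely combinatorial claim (their Theorem~3): signed random walk encoding is strictly more expressive than the Extended WL-test. This abstraction is exactly what dissolves the obstacle you flag in your final paragraph: once both sides are WL-style colour-refinement procedures, you never need injectivity of the learnable weights $w_r$, readout expressiveness of softmax attention, or a layer-by-layer simulation. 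The ``at least as expressive'' direction then becomes the short observation that if the multisets of signed random-walk distances agree for every node, so do the counts of $+1$ and $-1$ entries, hence the positive/negative $1$-hop neighbourhoods, hence the Extended WL labels.

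There is also a concrete gap in your strict-improvement step. Your counterexample is depth-dependent: you pick a cycle whose length exceeds $2L+1$ so that an $L$-layer SGCN cannot see the imbalance. That only separates the Transformer from a \emph{fixed-depth} SGCN, whereas the theorem as stated is about SGCN without a depth bound (the $L\le 3$ remark in Challenge~1 is an empirical practice, not part of the model class). The paper instead \emph{reuses the pair from Figure~\ref{fig:non-isomorphic}}---a signed $6$-cycle versus two signed triangles---which is the classic WL-indistinguishable pair: the Extended WL-test assigns identical labels at every iteration, so SGCN of \emph{any} depth fails, yet a length-$3$ random walk already separates them (one graph admits a $3$-step return to the start, the other does not). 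Swapping your long-cycle construction for such a WL-hard pair, or adopting the WL-test reduction outright, would close the gap.
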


According to the paper \cite{zhang2023rsgnn}, the positive or negative relationship between two nodes in an unbalanced cycle is contradictory (
One path is positive, while the other is negative). Therefore, it is impossible to infer their relationship from the local structure. Signed random walk encoding combines information from multiple weighted paths to judge the relationship between nodes. This allows us to help the model obtain reasonable association information between two points within an unbalanced cycle.

\subsubsection{Transformer Layer.} 
In the self-attention layer, each attention head calculation formula is as follow:
\begin{equation}\footnotesize
    \mathrm{Attn}(h^{(l-1)})=\mathrm{softmax}(\tilde{A})V^{(l-1)}
\end{equation}
We introduce layer normalization (LN) before both the multi-head self-attention (MHA) mechanism and the feed-forward neural network (FFN). Moreover, for the feed-forward network, we unify the dimensions of the input, output, and the inner layer to the same value $d$. The MHA concatenates the representations from each attention head and maps them into a $d$-dimensional vector. Then we formally characterize the Transformer layer as below: 
\begin{equation}\footnotesize
\begin{split}
 h^{'(l)}=\text{MHA}(\mathrm{Attn}(h^{(l-1)}))+h^{(l-1)},\\
h^{(l)}=\mathrm{FFN}(\mathrm{LN}(h^{'(l)}))+h^{'(l)}
\end{split}
\end{equation}

The loss function follows SGCN \cite{derr2018signed}, which can be found in the Appendix. The training algorithm of encoding module is given in Algorithm \ref{alg:SE-SGformer}.

\begin{algorithm}\scriptsize
\caption{Training Algorithm of SE-SGformer}\label{alg:cap}
\begin{algorithmic}
\label{alg:SE-SGformer}
\STATE \textbf{Input:} A signed graph $\mathcal{G}=\left(\mathcal{V}, \mathcal{E}^{+}, \mathcal{E}^{-}\right)$; SE-SGformer model $f_{G}$; number of Transformer layers $L$; number of attention head $a$; max degree of positive or negative degrees $D$; number of random walks $r$; the length of random walk $l$; maximum path length $m$.
\STATE \textbf{Output:} node embedding $z_{i}$, $\forall v_{i}\in\mathcal{V}$.
\STATE Initialize the parameter of $f_{G}$.
\STATE Use spectral methods to generate initial node representation $x_i (\forall v_{i}\in\mathcal{V})$. 
\FOR {$v_i \in\mathcal{V}$}
    \STATE Obtain the $h_i^{(0)}$ by adding centrality encoding to node features in Equation \ref{eq:centrality}. 
\ENDFOR
\REPEAT
\FOR {$l$ = 1 to $L$}
    \STATE Obtain the adjacency matrix encoding by Equation \ref{eq:adjacency} and random walk encoding by Equation \ref{eq:random}.
    \STATE Obtain the attention score matrix $\tilde{A}$ by Equation \ref{eq:attention}.
    \STATE $h^{'(l)}=\text{MHA}(\mathrm{Attn}(h^{(l-1)})))+h^{(l-1)}$,
    \STATE $h^{(l)}=\mathrm{FFN}(\mathrm{LN}(h^{'(l)}))+h^{'(l)}$,
\ENDFOR
\STATE Update the parameters of $f_{G}$ based on loss (Equation \ref{eq:loss}).
\UNTIL{convergence}
\STATE $z_{i}\leftarrow\mathbf{h_{i}}^{(L)}, \forall v_{i}\in\mathcal{V}$
\end{algorithmic}
\end{algorithm}

\subsection{Explainable Prediction Module}
Intuitively, if the similarity between node $v_{i}$ and node $v_{j}$ is closer to the similarity between node $v_{j}$ and its most similar positive neighbors, there is likely a positive relationship between node $v_{i}$ and node $v_{j}$. Similarly, if the similarity between node $v_{i}$ and node $v_{j}$ is closer to the similarity between node $v_{j}$ and its least similar negative neighbors, then there is likely a negative relationship between node $v_{i}$ and node $v_{j}$. In this paper, we use Euclidean Distance to measure the similarity between nodes. The closer the distance between two nodes, the more similar they are. Let $z_i$ represent the embedding obtained after the Transformer Encoder encoding the node $v_{i}$. Then the distance of node $v_{i}$ and node $v_{j}$ can be calculated as follow:
\begin{equation}\footnotesize
    d_{ij}=\left\|z_i-z_j\right\|_2
    \label{eq:distance}
\end{equation}
We first generate the diffusion matrix $S$ for the entire graph through the SRWR algorithm \cite{jung2016personalized}. $S$ is based on the adjacency matrix and incorporates the potential positive and negative relationship information between nodes that the SRWR has uncovered. $S_{ij} = 1$ indicates a positive edge between node $v_i$ and node $v_j$, while $ S_{ij} = -1$ indicates a negative edge between them and $S_{ij}=0 $ signifies no relationship between the two nodes. The specific acquisition method of the diffusion matrix $S$ can be found in Appendix. For each node, we sample $n$ nodes from its positive neighbors using the adjacency matrix $A$, calculate the distance between the node and these neighbors, sort them in ascending order, and select $K$-nearest positive neighbors. Similarly, we also sample $n$ nodes from its negative neighbors, calculate the distance between the node and these neighbors, sort them in ascending order, and select $K$-farthest negative neighbors. If a node does not have enough $K$ negative neighbors, we sample its negative neighbors from the diffusion matrix $S$. 

Then we can identify the $K$-nearest (farthest) positive (negative) neighbor nodes of a given node. Let $\mathcal{K}_{ip}=\left\{v_{ip}^{1}, \ldots, v_{ip}^{k}\right\}$ be the set of $K$-nearest positive nodes of the node $v_{i}$ and $\mathcal{K}_{in}=\left\{v_{in}^{1}, \ldots, v_{in}^{k}\right\}$ be the set of $K$-farthest negative nodes of the node $v_{i}$. $d_{ip}$ denotes the median distance from $v_{i}$ to its $K$-nearest positive nodes:
\begin{equation}\footnotesize
    d_{ip} = \text{median} \left( \left\|z_{i}-z_j\right\|_{2}, v_{j} \in \mathcal{K}_{ip} \right)
\end{equation}
Similarly, $d_{in}$ denotes the median distance from $v_{i}$ to its $K$-farthest negative nodes:
\begin{equation}\footnotesize
    d_{in} = \text{median} \left( \left\|z_i-z_j\right\|_{2},  v_{j} \in \mathcal{K}_{in} \right)
\end{equation}
Then $d_{ij}$ denotes the distance from $v_{i}$ to $v_{j}$ which is calculated by Equation \ref{eq:distance}.
The result of link sign prediction can be obtained by comparing the distances between $d_{ij}$ and $d_{ip}$ or $d_{in}$. If $d_{ij}$ is closer to $d_{ip}$, the result of link sign prediction $\hat{y}_{ij} = 1$ or if $d_{ij}$ is closer to  $d_{in}$, then $\hat{y}_{ij} = -1$. 


\textbf{Time complexity analysis.}
The time complexity of Transformer primarily comes from the self-attention mechanism, which is $(O(|V|^2 \cdot d))$. The time complexity of the discriminate function can be analyzed by considering the key operations it performs. For each node, the function calculates the distances between the node's embedding and its neighbors' embeddings. For positive neighbors, it computes the distance for \( n \) sampled neighbors with a time complexity of \( O(n \cdot d) \). A similar calculation is performed for negative neighbors. Subsequently, the function selects the top \( K \) nearest neighbors from the \( n \) samples, which has a complexity of \( O(K \log n) \). These steps are repeated for each of the \( |V| \) nodes, leading to a total complexity of \( O(|V| \cdot (n \cdot d + K \log n)) \). 
The overall time complexity is \( O(|V| \cdot (n \cdot d + K \log n))+|V|^2 \cdot d)\).

\begin{table*}[htbp]\footnotesize
\centering
\begin{tabular}{lccccccc}
\toprule
\textbf{Dataset} & \textbf{GCN} & \textbf{GAT} & \textbf{SGCN} & \textbf{SNEA} & \textbf{SGCL} & \textbf{SIGformer} & \textbf{SE-SGformer} \\
\midrule
Amazon-music & 63.87 ± 3.57 & 65.39 ± 2.88 & 70.63 ± 0.69 & 70.48 ± 0.05 &\underline{78.26 ± 1.52}  & 58.64 ± 0.64 & \textbf{79.20 ± 0.23}† \\
Epinions & 71.07 ± 1.26 & 73.97 ± 1.64 &  \textbf{86.97 ± 1.53} & \underline{82.26 ± 0.57} & 70.83 ± 5.10 & 57.07 ± 0.38  & 72.84 ± 1.78 \\
KuaiRand & 44.35 ± 0.00 & 51.63 ± 2.84 & \textbf{62.85 ± 0.05} & \underline{61.95 ± 0.13} & 60.68 ± 0.99 & 61.40 ± 0.47 & 56.89 ± 0.12 \\
KuaiRec & 61.56 ± 0.42 & 65.73 ± 0.74 & \underline{85.11 ± 0.11} &  79.69 ± 0.01 & 79.84 ± 3.13 &  61.31 ± 1.37 & \textbf{85.60 ± 0.05}† \\
WikiRfa & 70.79 ± 6.02 & 71.31 ± 3.56 & \underline{78.69 ± 1.06} &75.20 ± 0.13 & 75.02 ± 4.33  & 65.60 ± 0.94 & \textbf{79.99 ± 0.08}† \\
WikiElec & 66.21 ± 1.50 & 66.50 ± 1.76 & 79.14 ± 0.48 & 77.10 ± 0.68 & \underline{79.63 ± 2.82}  & 65.74 ± 2.72 & \textbf{80.63 ± 0.08}† \\
Bitcoin-OTC & 83.77 ± 0.60 & 86.37 ± 1.24 & \underline{88.22 ± 0.69} & 86.05 ± 0.46 & 87.65 ± 1.74  & 80.30 ± 2.32 & \textbf{90.03 ± 0.35}‡ \\
Bitcoin-Alpha & 83.99 ± 1.61 & 86.25 ± 0.38 & \underline{87.96 ± 0.38} & 87.95 ± 0.15 & 83.01 ± 3.79 & 73.82 ± 3.55 & \textbf{89.88 ± 0.40}‡ \\
\bottomrule
\end{tabular}
\caption{Comparison of Accuracy(\%) across Different Models. The best scores are in bold, and the second-best ones are underlined. "†" and "‡" indicate the statistically significant improvements with $p < 0.05$ and $p < 0.01$ (one-sided paired t-test) over the best baseline, respectively.}
\label{tab:results}
\end{table*}

\section{Experiments}
In this section, we conduct experiments on real-world datasets to verify the effectiveness of SE-SGformer.
In particular, we aim to answer the following research questions: 
\begin{itemize}
    \item \textbf{RQ1}: Can SE-SGformer provide accurate predictions and explanatory information ?
    \item \textbf{RQ2}: How do the hyper-parameters affect the performance of SE-SGformer ?
    \item \textbf{RQ3}: How does each component of SE-SGformer contribute to the link sign prediction performance?
\end{itemize}

Our experimental datasets include Bitcoin-OTC, Bitcoin-Alpha, WikiElec, WikiRfa, Epinions, KuaiRand, KuaiRec and Amazon-music, with baseline methods being GCN \cite{kipf2016semi}, GAT \cite{velivckovic2017graph}, 
SGCN \cite{derr2018signed}, SNEA \cite{li2020learning}, SGCL \cite{shu2021sgcl}, and SIGFormer \cite{chen2024sigformer}. For detailed information on the datasets and baselines, please refer to the Appendix.

\textbf{Metrics.} Prediction accuracy and Explanation accuracy (Precision@$K$) are two metrics for evaluating the performance of link sign prediction and explanation performance. Prediction accuracy measures the overall correctness of the model in predicting whether an edge is positive or negative by calculating the proportion of correctly predicted edges. Also, we generate corresponding explanatory information for real-world datasets, which includes the $K$-nearest positive neighbors and $K$-farthest negative neighbors for each node as the ground truth for explanation. The specific process of generation is detailed in Appendix. Then precision@$K$ is the proportion of the nodes identified after sorting the neighbors of the nodes during the decision-making process, which constitutes the explanatory truth. 

\textbf{Configurations.}
All experiments were conducted on a 64-bit machine equipped with two NVIDIA GPUs (NVIDIA L20, 1440 MHz, 48 GB memory). For our SE-SGformer model, we used the Adam optimizer and performed a grid search to determine the hyperparameters. Specifically, we set the hidden embedding dimension \(d\) to 128, the learning rate to \(1 \times 10^{-3}\), the weight decay to \(5 \times 10^{-4}\), and the number of Transformer layers to $L = 1$. For the discriminator, we choose $K = 40$ and the number of randomly sampled neighbors $m = 200$. We searched for the optimal $L$ in the range [1, 4] with a step size of 1, $d$ in the range [16, 32, 64, 128], and max degree in the range [6, 8, 10, 12, 14].

\subsection{Performance and Explanation Quality (RQ1)}
To answer RQ1, we compare the performance of SE-SGformer with baselines on real-world datasets in terms of link sign prediction. All datasets were experimented with five times, and the link sign prediction accuracy and standard deviation are shown in Table \ref{tab:results}. From the table, we can observe the following results:
\begin{itemize}
    \item Our method outperforms SGCN, SNEA, and SGCL on most datasets, indicating that our encoding approach produces suitable graph representations and, combined with our explainable decisions, achieves excellent results. As mentioned earlier, our method can alleviate issues such as the scarcity of negative edges in signed graphs and the inability to learn appropriate representations for unbalanced cycles. Therefore, our method can achieve better performance.
    \item The performance of SE-SGformer far surpasses that of GCN, GAT, that is because our method makes better use of the information from negative edges
    compared to unsigned GNNs. 
    \item SIGformer is a relatively new signed graph Transformer model for recommendation systems, and our model outperforms SIGformer on most datasets. This also demonstrates the effectiveness of our unique encoding design.
\end{itemize}
Then, we evaluate the quality of the explanatory information of the $k$- nearest positive (farthest negative) neighbors identified in the explainable decision-making process. We set $K$ = 40 and compared our method with the baseline on three datasets. The precise@40 and standard deviation are shown in Table \ref{tab:exp_precision}. We can observe that our model achieves better explanatory accuracy compared to other methods, indicating that it performs well in identifying $K$-nearest positive ($K$-farthest negative) neighbors. This also reflects that our encoder is capable of learning suitable graph representations. The reason we set $K=40$ is that a node typically has many positive neighbors, hence a small value of $K$ could lead to the selection of nodes that are not representative, resulting in more errors. Also, as mentioned earlier, there are not many negative neighbors in the actual dataset, and our model can still achieve good explanatory accuracy with $K=40$. This indicates that our diffusion matrix effectively mitigates the issue of fewer negative edges in the datasets.


\begin{table}[htbp]\footnotesize
\centering
\resizebox{0.5\textwidth}{!}{
\begin{tabular}{lcccc}
\toprule
\textbf{Model} & \textbf{Bitcoin-OTC} & \textbf{Bitcoin-Alpha} & \textbf{Amazon-music} \\
\midrule
\textbf{SGCN}        & 57.25 ± 0.15 & 54.88 ± 0.14 & 60.78 ± 0.07 \\
\textbf{SNEA}        & 55.09 ± 0.16 & 55.43 ± 0.15 & 60.29 ± 0.07 \\
\textbf{SGCL}        & 54.10 ± 0.16 & 54.59 ± 0.15 & 61.70 ± 0.08 \\
\textbf{SIGformer}   & 53.33 ± 0.14 & 51.27 ± 0.09 & 58.76 ± 0.05 \\
\textbf{SE-SGformer} & \textbf{75.19 ± 0.13} & \textbf{94.47 ± 0.58} & \textbf{76.07 ± 0.20} \\
\bottomrule
\end{tabular}
}
\caption{The metric precision@40 (\%) of baselines on different datasets}
\label{tab:exp_precision}
\end{table}


\begin{figure}[htbp]
    \centering
    \begin{minipage}{0.49\linewidth}
        \centering
        \includegraphics[width=\linewidth]{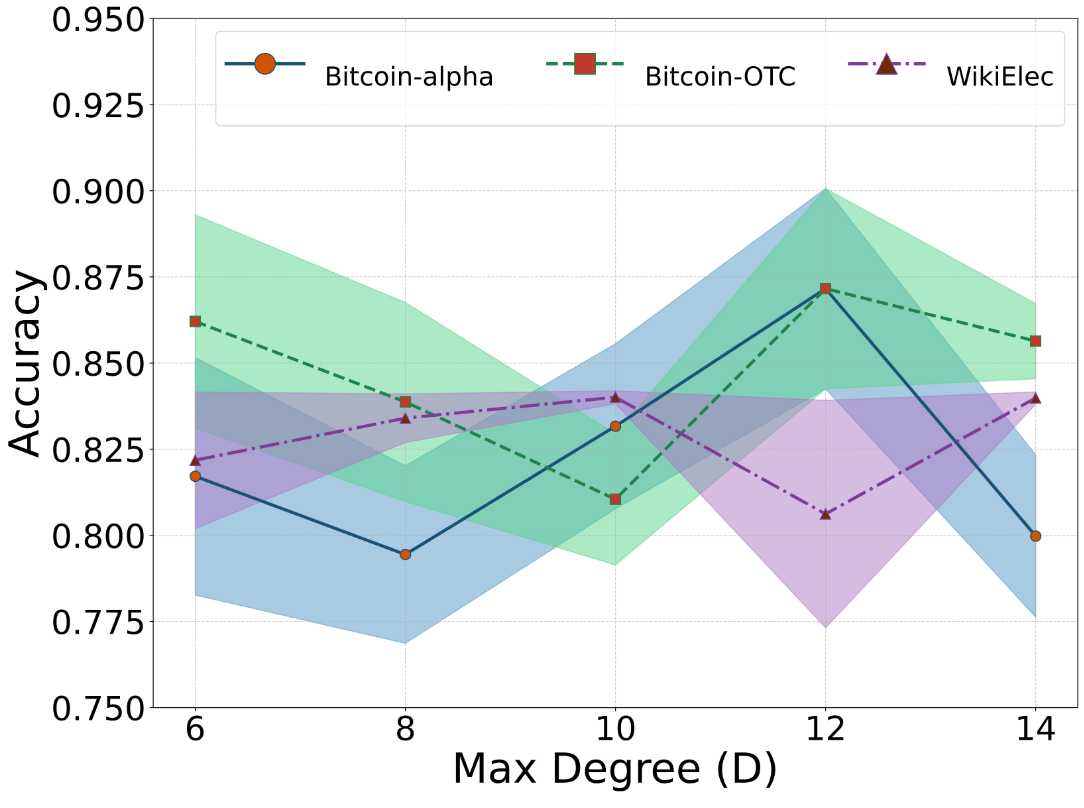}
        \label{fig:degree}
    \end{minipage}%
    \hfill
    \begin{minipage}{0.49\linewidth}
        \centering
        \includegraphics[width=\linewidth]{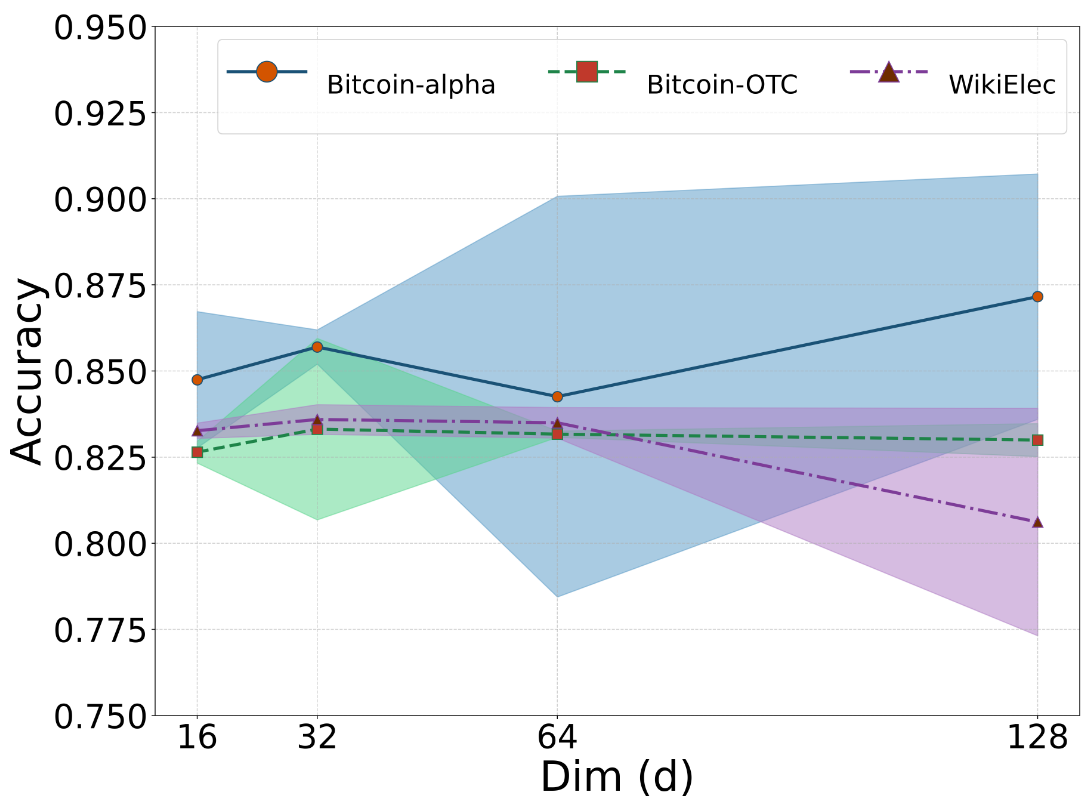}
        \label{fig:dim}
    \end{minipage}
    \caption{Parameter sensitivity analysis}
    \label{fig:sensitivity}
\end{figure}

\subsection{Hyperparameter Sensitivity Analysis (RQ2)}
In this subsection, we conduct a detailed sensitivity analysis of the key hyper-parameters max degree ($D$), dim ($d$), and layer ($L$). $D$ represents the maximum value of positive or negative degrees, $d$ refers to the dimension of the node embedding, and $L$ indicates the number of Transformer layers. We systematically vary these hyperparameters to assess their impact on the model's performance. In Figure \ref{fig:sensitivity}, $D$ is varied between 6 and 14, showing that while changes in $D$ slightly influence the model's performance, the effect is manifested as minor fluctuations across different datasets. $d$ is adjusted from 16 to 128 to explore how the dimension of the node embedding affects performance, with results indicating relatively stable performance and only slight deviations at larger dimensions. Lastly, $L$ is varied from 1 to 4, revealing that increasing the number of layers generally reduces the model's performance, detailed experimental results can be found in Appendix.

\begin{figure}[htbp]
    \centering
    \begin{minipage}{0.49\linewidth}
        \centering
        \includegraphics[width=\linewidth]{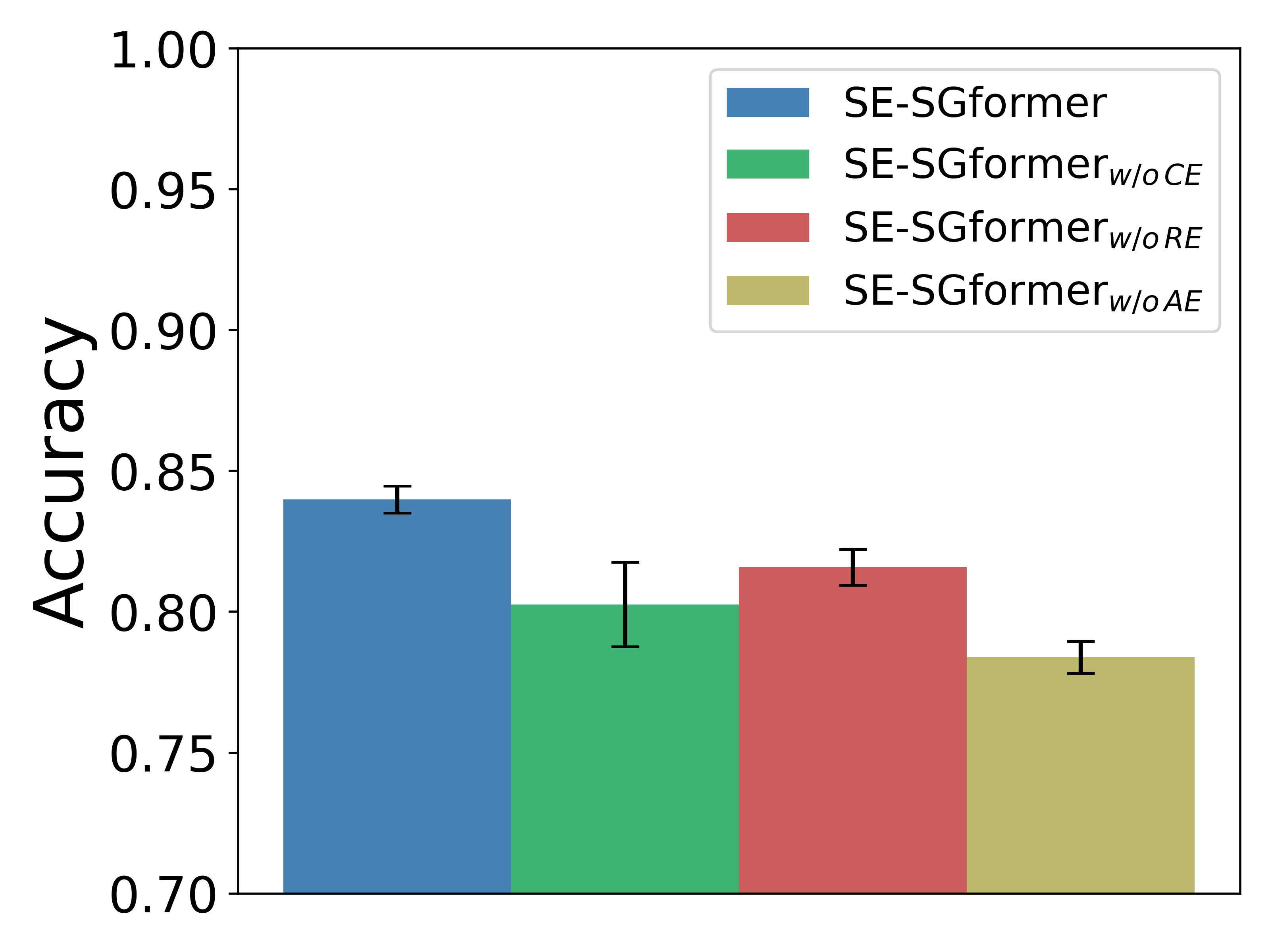}
        \label{fig:bitcoinOTC}
    \end{minipage}%
    \begin{minipage}{0.49\linewidth}
        \centering
        \includegraphics[width=\linewidth]{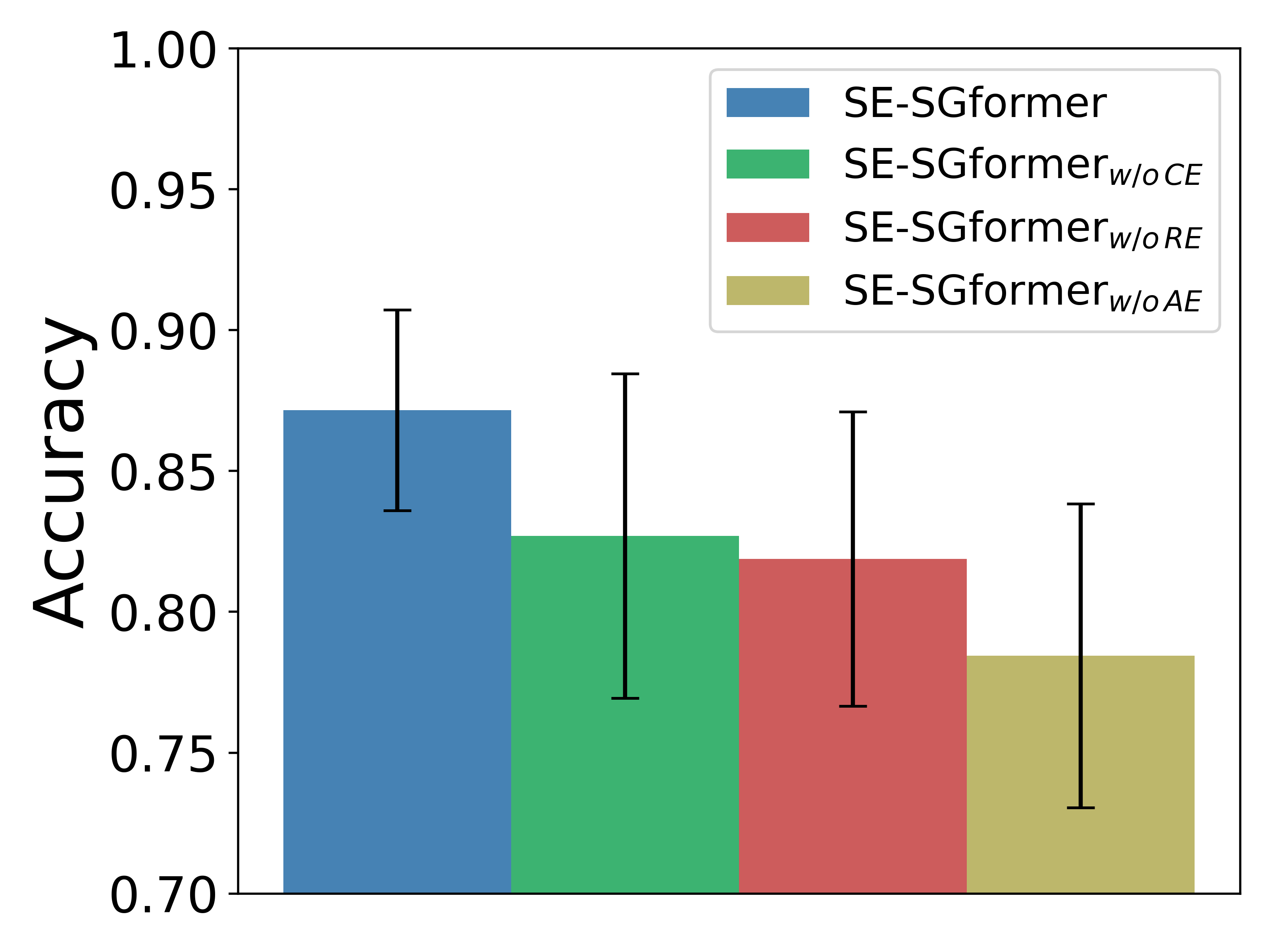}
        \label{fig:bitcoinAlpha}
    \end{minipage}
    \caption{Ablation study on Bitcoin-OTC (left) and Bitcoin-Alpha (right) datasets.}
    \label{fig:ablationStudy}
\end{figure}

\subsection{Ablation Study (RQ3)}
To answer RQ3, we conducted ablation experiments to explore the impact of the three encodings of Transformer on prediction accuracy. SE-SGformer\textsubscript{w/o-CE} denotes the variant without centrality encoding. SE-SGformer\textsubscript{w/o-RE} denotes the variant without signed random walk encoding. SE-SGformer\textsubscript{w/o-AE} denotes the variant without adjacency matrix encoding. The experiment results on Bitcoin-OTC and Bitcoin-Alpha datasets are reported in Figure \ref{fig:ablationStudy}. We can observe that the model's performance consistently decreases when each of the three types of encoding is removed, i.e., SE-SGformer\textsubscript{w/o-CE}, SE-SGformer\textsubscript{w/o-RE} and SE-SGformer\textsubscript{w/o-AE} exhibit significantly inferior performance than SE-SGformer on Bitcoin-OTC and Bitcoin-Alpha. This result clearly reflects the effectiveness of the three types of encoding we used. Among them, the model's performance drops the most when the adjacency matrix encoding is removed, indicating that the information about the direct relationships between nodes and their first-order neighbors is highly useful.

\begin{figure}[htbp]
    \centering
    \begin{minipage}{0.49\linewidth}
        \centering
    \includegraphics[width=1\linewidth]{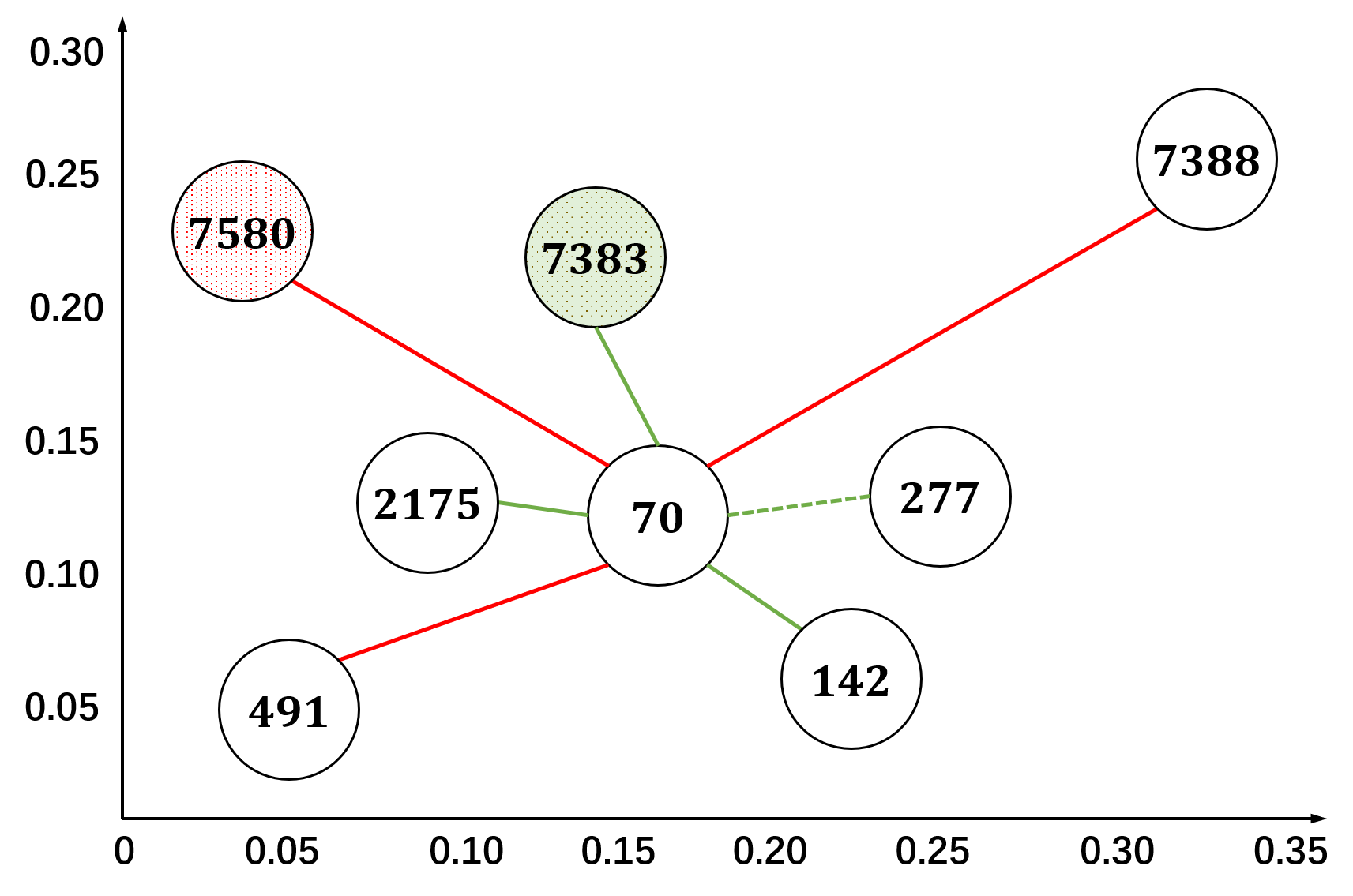}
    \label{fig:left}
    \end{minipage}%
    \begin{minipage}{0.49\linewidth}
        \centering
    \includegraphics[width=1\linewidth]{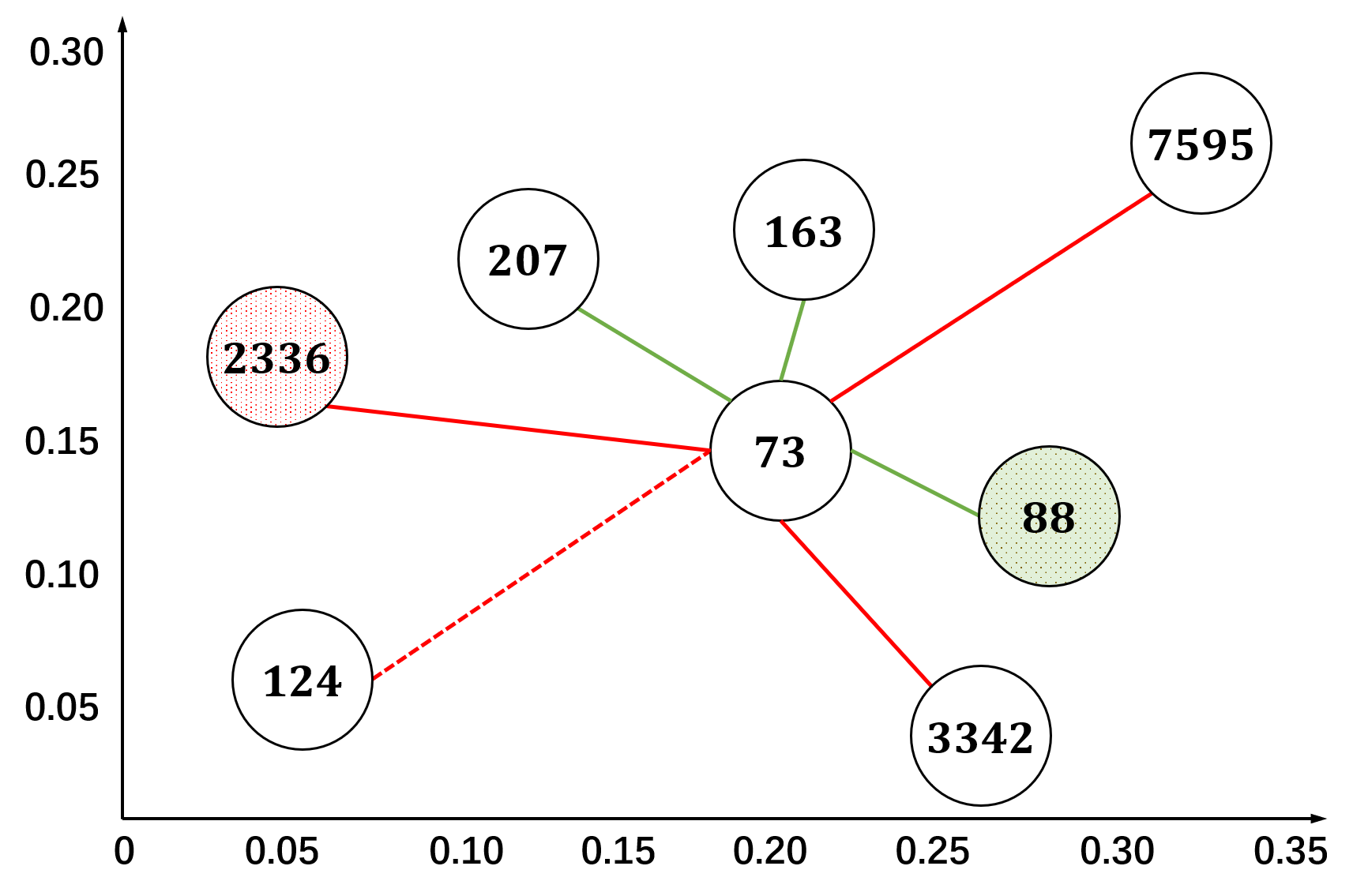}
    \label{fig:right}
    \end{minipage}
    \caption{Case study for node pairs with positive link (left) and negative link (right). Green lines represent positive edges, red lines represent negative edges, and dashed lines indicate the edges to be predicted.}
    \label{fig:CaseStudy}
\end{figure} 

\textbf{Case Study}. We conduct a case study using the Bitcoin-Alpha dataset to illustrate how our discriminator accurately determines the sign of an edge by identifying $k$ neighbors. Specifically, we analyze a pair of points where the ground truth of their edge is positive, showcasing this as a typical case for predicting a positive edge (Figure \ref{fig:CaseStudy} (left)). In the figure, we observe the median distance of three positive neighbors (e.g., point 7383) and the median distance of three negative neighbors (e.g., point 7580) for point 70. The distance between point 70 and point 277 is evidently closer to the median distance of the positive neighbors, leading to the prediction of this relationship as a positive edge. Similarly, we identified a typical case for predicting a negative edge (Figure \ref{fig:CaseStudy} (right)). The distance between point 73 and point 124  is closer to the median distance of the negative neighbors (e.g., point 2336), resulting in the prediction of a negative edge. 

\section{Conclusion}

In this paper, we address the challenge of self-explainable SGNNs by proposing SE-SGformer, a novel graph Transformer-based model for signed graphs. Our model predicts link signs by identifying $K$-nearest positive and $K$-farthest negative neighbors. We also theoretically prove that our signed random walk encoding is more powerful than the shortest path encoding and our Transformer architecture with signed random walk encoding is more powerful than SGCN.
Extensive experiments on real-world datasets validate our model's effectiveness. As the first exploration into the explainability of link sign prediction, we anticipate further research in this area.


\section*{Acknowledgment}
This study was supported by the National Natural Science Foundation of China (W2411020, 32170645). We thank the high-performance computing platform at the National Key Laboratory of Crop Genetic Improvement in Huazhong Agricultural University.

\bigskip

\bigskip
\bibliography{aaai25}

\section*{Reproducibility Checklist}
This paper:
\begin{itemize}
    \item Includes a conceptual outline and/or pseudocode description of AI methods introduced (\textbf{yes})
    \item Clearly delineates statements that are opinions, hypothesis, and speculation from objective facts and results (\textbf{yes})
    \item Provides well marked pedagogical references for less-familiare readers to gain background necessary to replicate the paper (\textbf{yes})
\end{itemize}

\noindent Does this paper make theoretical contributions? (\textbf{yes})

\noindent If yes, please complete the list below.
\begin{itemize}
    \item All assumptions and restrictions are stated clearly and formally. (\textbf{yes})
    \item All novel claims are stated formally (e.g., in theorem statements). (\textbf{yes})
    \item Proofs of all novel claims are included. (\textbf{yes})
    \item Proof sketches or intuitions are given for complex and/or novel results. (\textbf{yes})
    \item Appropriate citations to theoretical tools used are given. (\textbf{yes})
    \item All theoretical claims are demonstrated empirically to hold. (\textbf{yes})
    \item All experimental code used to eliminate or disprove claims is included. (\textbf{yes})
\end{itemize}

\noindent Does this paper rely on one or more datasets? (\textbf{yes})

\noindent If yes, please complete the list below
\begin{itemize}
    \item A motivation is given for why the experiments are conducted on the selected datasets (\textbf{yes})
    \item All novel datasets introduced in this paper are included in a data appendix. (\textbf{yes})
    \item All novel datasets introduced in this paper will be made publicly available upon publication of the paper with a license that allows free usage for research purposes. (\textbf{NA})
    \item All datasets drawn from the existing literature (potentially including authors’ own previously published work) are accompanied by appropriate citations. (\textbf{yes})
    \item All datasets drawn from the existing literature (potentially including authors’ own previously published work) are publicly available. (\textbf{yes})
    \item All datasets that are not publicly available are described in detail, with explanation why publicly available alternatives are not scientifically satisficing. (\textbf{NA})
\end{itemize}

\noindent Does this paper include computational experiments? (\textbf{yes})

\noindent If yes, please complete the list below.
\begin{itemize}
    \item Any code required for pre-processing data is included in the appendix. (\textbf{yes}).
    \item All source code required for conducting and analyzing the experiments is included in a code appendix. (\textbf{yes})
    \item All source code required for conducting and analyzing the experiments will be made publicly available upon publication of the paper with a license that allows free usage for research purposes. (\textbf{yes})
    \item All source code implementing new methods have comments detailing the implementation, with references to the paper where each step comes from (\textbf{partial})
    \item If an algorithm depends on randomness, then the method used for setting seeds is described in a way sufficient to allow replication of results. (\textbf{yes})
    \item This paper specifies the computing infrastructure used for running experiments (hardware and software), including GPU/CPU models; amount of memory; operating system; names and versions of relevant software libraries and frameworks. (\textbf{yes})
    \item This paper formally describes evaluation metrics used and explains the motivation for choosing these metrics. (\textbf{yes})
    \item This paper states the number of algorithm runs used to compute each reported result. (\textbf{yes})
    \item Analysis of experiments goes beyond single-dimensional summaries of performance (e.g., average; median) to include measures of variation, confidence, or other distributional information. (\textbf{yes})
    \item The significance of any improvement or decrease in performance is judged using appropriate statistical tests (e.g., Wilcoxon signed-rank). (yes/partial/no)
    \item This paper lists all final (hyper-)parameters used for each model/algorithm in the paper’s experiments. (\textbf{yes})
    \item This paper states the number and range of values tried2 per (hyper-) parameter during development of the paper, along with the criterion used for selecting the final parameter setting. (\textbf{yes})
\end{itemize}

\clearpage

\section*{Appendix}

\setcounter{theorem}{0}

\subsection{Related Work}
In this section, we review related works. First, we review signed graph representation learning methods. Then, we review
Graph Transformer methods. Finally, we review explainable graph neural networks.

\textbf{Signed Graph Representation Learning}. The widespread use of social media has made signed networks ubiquitous, sparking interest in their network representations \cite{chen2018bridge, wang2018shine, zhang2023contrastive, zhang2023rsgnn}. While link sign prediction dominates current research, tasks such as node classification \cite{tang2016node}, ranking \cite{jung2016personalized}, and community detection \cite{bonchi2019discovering} remain underexplored. Traditional embedding methods like SNE \cite{yuan2017sne}, SIDE \cite{kim2018side}, SGDN \cite{jung2020signed}, and ROSE \cite{javari2020rose} rely on random walks and linear probabilities but may overlook deeper relationships. Neural networks, particularly GCN-based SGNNs such as SGCN \cite{derr2018signed} and GS-GNN, along with GAT-based models like SiGAT \cite{huang2019signed}, SNEA \cite{li2020learning}, SDGNN \cite{huang2021sdgnn}, and SGCL \cite{shu2021sgcl}, are now being applied to capture these complexities, advancing signed graph representation learning.

\textbf{Graph Transformers}. Current GNN methods mainly based on the message passing paradigm suffer from oversmoothing and long-range modeling issues. An important attempt to alleviate these problems is to employ Transformers for graph representation learning \cite{rong2020self, ying2021Transformers, chen2022structure,chen2024sigformer}. The success of Transformer in this domain usually relies on positional encodings that integrate graph structural information into the Transformer framework. For example, Graphormer \cite{ying2021Transformers}, built on the standard Transformer architecture, effectively encodes the structural information of graphs into the model by introducing centrality encoding, spatial encoding, and edge encoding, and has achieved excellent results on a variety of graph representation learning tasks. Graphormer's several encodings can capture graph structure information but are not suitable for signed graphs. SIGformer \cite{chen2024sigformer} utilizes the Transformer architecture and introduces sign-aware spectral encoding and sign-aware path encoding, which respectively capture the spectral properties and path patterns of the signed graph for recommendation in signed graphs. However, its sign-aware path encoding faces the issue of dimensionality explosion when the paths become too long. Therefore, using a Transformer as a signed graph encoder to learn appropriate graph representations still poses significant challenges.

\textbf{Explainability in Graph Neural Networks}. Graph Neural Networks (GNNs) are powerful tools for graph-based machine learning tasks, but their predictions are often notexplainable for humans, which limits their applicability in critical or sensitive domains. The current approaches aimed at improving the interpretability of GNNs can be categorized into post-hoc explanation methods and self-interpretable models. Post-hoc explanations usually treat GNNs as black boxes and then learn an explainer to explain the outputs of a trained GNN, which include gradient/feature-based methods \cite{baldassarre2019explainability,pope2019explainability}, perturbation-based methods \cite{ying2019gnnexplainer}, surrogate methods\cite{vu2020pgm}, decomposition methods \cite{schnake2021higher}, and generation methods\cite{yuan2020xgnn}. For example, GNNExplainer \cite{ying2019gnnexplainer} takes a trained GNN and its prediction, and it selects a small subgraph of the input graph together with a small subset of node features that are most influential for the prediction as an explanation. PGM-Explainer \cite{vu2020pgm} perturbs node features to obtain similar samples, then employs anexplainable Bayesian network to provide explanations. However, post-hoc explanations are not directly obtained from the GNNs, which may lead to inaccuracies or incompleteness when revealing the actual reasoning process of the original model. While self-interpretable models generate explanations by the model themselves, including contribution estimation, the introduction ofexplainable modules \cite{dai2021towards}, embedding prototype learning\cite{zhang2022protgnn}, and rationale generation \cite{yu2020graph}. For instance, SE-GNN \cite{dai2021towards} can find K-nearest labeled nodes based on node similarity and local structural similarity for each unlabeled node to give explainable node classification. ProtGNN \cite{zhang2022protgnn} combines prototype learning with GNNs, whose interpretability on a specific input graph comes from displaying the learned prototypes and similar subgraphs in the input graph. GIB \cite{yu2020graph} can generate subgraphs that share the most similar property to the input graphs based on the information bottleneck theory. 
Although these self-interpretable models reduce the risk of misinterpretation of the true decision-making rationale compared to post-hoc methods that require additional explainers, they are primarily tailored for unsigned graph neural networks. Furthermore, most of them focus onexplainable solutions for graph-level tasks and node classification tasks. Our work differs from the aforementioned self-explanatory models. We aim to develop a self-interpretable SGNN( Signed Graph Neural Networks) that can perform link sign prediction tasks and simultaneously provide explanations for its predictions.

\begin{figure}
    \centering
    \includegraphics[width=1\linewidth]{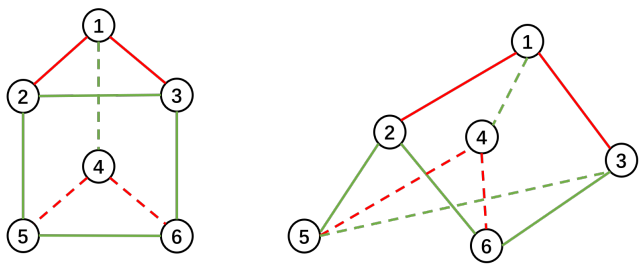}
    \caption{Signed Random walk encoding can decide the two graphs
as non-isomorphic while the Extended WL-test identify them as isomorphic.}
    \label{fig:enter-label}
\end{figure}

\subsection{Theorem 1 Experimental Verification}
We perform link sign prediction using the shortest path encoding(SPE)-based graph transformer and SE-SGformer on WikiElec, Bitcoin-OTC and Bitcoin-Alpha. The results are shown in Table \ref{tab:performance_comparison} below. We can see that SE-SGformer outperforms the SPE-based graph transformer in link sign prediction accuracy (Acc.) across all three datasets, which experimentally validates the conclusion in Theorem 1.
\begin{table}[h]
\centering
\resizebox{\linewidth}{!}{
\begin{tabular}{lcc}
\toprule
\textbf{Dataset} & \textbf{SE-SGformer} & \textbf{SPE-based graph transformer} \\
\midrule
WikiElec & \textbf{80.63} & 76.99 \\
Bitcoin-OTC & \textbf{90.03} & 85.02 \\
Bitcoin-Alpha & \textbf{89.88} & 86.18 \\
\bottomrule
\end{tabular}}
\caption{Performance comparison between SE-SGformer and SPE-based graph transformer on different datasets.}
\label{tab:performance_comparison}
\end{table}

\subsection{Proof of Theorem 2}
To analyze the expressive capabilities of the graph transformer architecture based on signed random walk encoding and SGCN, we introduce the definition of Extended WL-test For Signed Graph \cite{zhang2023rsgnn}:
\begin{definition}[Extended WL-test For Signed Graph] 
For a node $v_i$, its $l$-balanced (unbalanced) reach set $\mathcal{B}(l)(\mathcal{U}(l)))$ is defined as a
set of nodes that have even (odd) negative edges along a path. Based on the message-passing mechanism of SGNNs,  the process of extended WL-test for signed graph can be defined as below. For the first-iteration update, i.e. $l$= 1, the WL node label of a node $v_{i}$ is $(X_{i}^{(1)}(\mathcal{B}), X_{i}^{(1)}(\mathcal{U}))$ where:
\begin{equation}
\tiny
    \begin{gathered}
X_{i}^{(1)}\left(\mathcal{B}\right) =\varphi\left(\left\{\left(X_{i}^{(0)},\{X_{j}^{(0)}:v_{j}\in\mathcal{N}_{i}^{+}\}\right)\right\}\right) \\
X_{i}^{(1)}(\mathcal{U}) =\varphi\left(\left\{\left(X_{i}^{(0)} , \{X_{j}^{(0)} :v_{j}\in\mathcal{N}_{i}^{-}\}\right)\right\}\right) 
\end{gathered}
\end{equation}
For $l > 1$, the WL node label of a node $v_{i}$ is $(X_{i}^{(1)}(\mathcal{B}), X_{i}^{(1)}(\mathcal{U}))$ where:
\begin{equation}
\tiny
\begin{aligned}
X_{i}^{(\ell)}\left(\mathcal{B}\right) & = \varphi\Bigg(\big\{ \big(X_{i}^{(\ell-1)}\left(\mathcal{B}\right), \{X_{j}^{(\ell-1)}\left(\mathcal{B}\right) : v_{j} \in \mathcal{N}_{i}^{+}\}, \\
&\quad \{X_{j}^{(\ell-1)}\left({\mathcal U}\right) : v_{j} \in \mathcal{N}_{i}^{-}\}\big\}\Bigg), \\
X_{i}^{(\ell)}(\mathcal{U}) & = \varphi\Bigg(\big\{ \big(X_{i}^{(\ell-1)}\left(\mathcal{U}\right), \{X_{j}^{(\ell-1)}\left(\mathcal{U}\right) : v_{j} \in \mathcal{N}_{i}^{+}\}, \\
&\quad \{X_{j}^{(\ell-1)}\left(\mathcal{B}\right) : v_{j} \in \mathcal{N}_{i}^{-}\}\big\}\Bigg).
\end{aligned}
\end{equation}
where $\varphi$ is an injective function.
\end{definition}
The extended WL-test is defined with a similar aggregation and update process as a SGCN, and thus can be used to capture the expressibility of the
SGCN. Then based on conclusions proposed by \cite{zhu2023structural}, we can redefine the fundamental challenge of determining the expressive power of graph Transformers as understanding the expressive power of the SEG-WL test, which is inherently shaped by the design of these structural encodings. The Extended WL-test can be regard as a special SEG-WL test. 
Therefore, we can transform the problem of the expressive power of the our Transformer with random walk encoding and SGCN into a comparison of the expressive power between our random walk encoding and the extended WL-test.
So we first prove the following theory:
\renewcommand{\thetheorem}{3}
\begin{theorem}
\label{the:extended}
With a sufficient number of random walk length, signed random walk encoding is more expressive than the Extended WL-test For Signed Graph.
\end{theorem}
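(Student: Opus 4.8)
The plan is to prove Theorem~3 by the same two-part argument used for Theorem~1, but now comparing against the Extended WL-test rather than shortest-path encoding. First I would show that whenever signed random walk encoding identifies two graphs as isomorphic, the Extended WL-test does too (so random walk encoding is at least as expressive), and second I would exhibit a pair of graphs that random walk encoding separates but the Extended WL-test cannot (so it is strictly more expressive). Since the excerpt already provides Figure~\ref{fig:enter-label} as the intended witness pair for the strict-separation direction, the bulk of the work is the first (containment) direction.

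For the containment direction, the key observation is that the Extended WL-test only distinguishes nodes by the \emph{balanced/unbalanced parity} of the paths reaching them, i.e.\ it tracks whether the product of edge signs along a path is $+1$ or $-1$, aggregated iteratively over the positive and negative neighborhoods $\mathcal{N}_i^+$ and $\mathcal{N}_i^-$. I would argue that this parity information is \emph{recoverable} from the signed random walk distance $\psi(v_i,v_j)$: the sign $\prod_{u=n}^{m-1}\sigma(q_u,q_{u+1})$ carried by each sampled walk encodes exactly the balanced/unbalanced status of that path, while the magnitude $|m-n|$ records its length. Thus, with a sufficient number of random walks, the multiset $\mathcal{S}_{i,j}$ of signed distances refines (or at least matches) the parity-and-reachability partition that the Extended WL-test computes. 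Concretely, if two graphs induce identical random walk encodings, then for every node pair the collection of signed path lengths agrees, which forces the balanced/unbalanced reach sets $\mathcal{B}(l),\mathcal{U}(l)$ to agree at every iteration $l$, and hence the WL node labels coincide, so the Extended WL-test declares them isomorphic.

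The strict-separation direction reuses the structural intuition from the proof of Theorem~\ref{the:1}: the Extended WL-test, like the shortest-path encoding, is blind to the \emph{number and multiplicity} of distinct paths and to revisiting phenomena (e.g.\ returning to a node in a fixed number of non-backtracking steps), whereas random walk encoding samples multiple paths and therefore captures this finer structure. I would point to Figure~\ref{fig:enter-label} and verify that the two depicted graphs receive identical $(X_i(\mathcal{B}),X_i(\mathcal{U}))$ labels under the Extended WL-test at every iteration, yet differ in their distribution of signed random walk distances, so random walk encoding separates them.

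The main obstacle I anticipate is making the containment direction rigorous rather than merely plausible: the Extended WL-test is an \emph{iterative} refinement that aggregates neighbor labels through an injective $\varphi$, so I must show that equality of the random walk distance multisets $\mathcal{S}_{i,j}$ across the two graphs implies equality of WL labels at \emph{every} iteration $l$, not just agreement of raw path-sign statistics. This requires either an induction on $l$ showing that each level of the WL hierarchy is determined by walk statistics up to length $l$, or an appeal to the SEG-WL framework of \cite{zhu2023structural} to reduce the comparison to a statement about structural encodings. Handling the non-backtracking constraint in the walk definition, and ensuring ``a sufficient number of random walks'' genuinely suffices to expose the shortest signed path of each parity, is the delicate quantifier that must be pinned down carefully.
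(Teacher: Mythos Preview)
Your proposal is correct and follows essentially the same two-case argument as the paper: case~1 shows that isomorphism under random walk encoding implies isomorphism under the Extended WL-test, and case~2 exhibits the pair in Figure~\ref{fig:enter-label} (distinguished by the ``revisit in 3 steps'' property) as a witness that the converse fails. The only notable difference is that you are more cautious than the paper about the containment direction: the paper's actual argument for case~1 simply observes that with sufficient walk length the multiset of signed distances from each node recovers the counts of $+1$ and $-1$ entries (i.e.\ the numbers of positive and negative 1-hop neighbors), and concludes directly from this that the Extended WL-test cannot separate the graphs---it does not carry out the induction over WL iterations that you flag as the main obstacle.
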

\begin{proof}
We illustrate our proof by considering the following two cases. 1) If two signed graphs are identified as isomorphic by random walk encoding, two graphs are also identified as
isomorphic by the Extended WL-test For Signed Graph. 2) Two different signed graphs can be distinguished as non-isomorphic by random walk encoding but cannot by the Extended WL-test For Signed Graph.

In case 1, two signed graphs $G_1$, $G_2$ are identified isomorphic by random walk encoding with sufficient walk length such that all 1-hop positive and negative neighbors of each node in $G$ can be explored. Then we can obtain the signed distance between every pair of nodes. Let multiset $\mathcal{S}^1=\{s_1^1,...,s_{|\mathcal{V}_1|}^1\}$ be the random walk distances between $v_{i}$ and the other nodes in $G_1$, there exists corresponding multiset $\mathcal{S}^2=\{s_1^2,...,s_{|\mathcal{V}_2|}^2\}$  for $v_{j}$ in $G_2$. The number of $s_{i}^1=1$ and $s_{j}^2=1$, the number of $s_{i}^1=-1$ and $s_{j}^2=-1$ in the multisets must be the same. Therefore, the Extended WL-test For Signed Graph also views the two graphs as isomorphic.

As illustrated in case 2 (Figure \ref{fig:enter-label}), the structures of the two graphs are different. However, the Extended WL-test For Signed Graph cannot identify these two graphs as non-isomorphic. In the left graph, node $v_1$ and node $v_4$ have one positive neighbor and two negative neigbors and the remaining nodes all have one negative neighbor and two positive neighbors. In the right graph, node $v_1$ and node $v_4$ also have one positive neighbor and two negative neigbors and the remaining nodes also have one negative neighbor and two positive neighbors. Even if each node aggregates the neighbor node information within two hops, the number of positive and negative neighbors of the nodes in the two graphs corresponds one-to-one. For example, node $v_1$ has one positive neighbor and four negative neighbors within two hops in the left graph and node $v_1$ also has one positive neighbor and four negative neighbors within two hops in the right graph. Therefore, the Extended WL-test For Signed Graph views them as isomorphic. In contrast, random walk encoding regards two graphs as non-isomorphic. For example, one can revisit the node in 3 steps in the left graph, while it is impossible on the right graph. So random walk encoding is strictly more powerful than the Extended WL-test For Signed Graph. The theorem follows.
\end{proof}
Theorem \ref{the:extended} shows that signed random walk encoding is more powerful than the Extended WL-test For Signed Graph, which illustrates that our graph Transformer with signed random walk encoding is more powerful than SGCN.
Then we can conclude the following theorem:
\renewcommand{\thetheorem}{2}
\begin{theorem}
With a sufficient number of random walk length, the graph transformer architecture based on signed random walk encoding is more expressive than SGCN.
\end{theorem}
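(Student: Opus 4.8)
The plan is to reduce the comparison between the signed graph Transformer and SGCN to a comparison between two combinatorial refinement procedures, and then to settle that reduced comparison directly. First I would invoke the characterization that the Extended WL-test for signed graphs captures exactly the expressive power of SGCN, since the Extended WL-test mirrors SGCN's message-passing aggregation over the balanced and unbalanced reach sets $\mathcal{B}$ and $\mathcal{U}$. Next, following the framework of \cite{zhu2023structural}, I would recast the expressive power of a graph Transformer as the expressive power of a structural-encoding-parametrized WL test (the SEG-WL test), whose discriminative strength is entirely determined by the chosen positional/structural encoding. Observing that the Extended WL-test is itself a special instance of the SEG-WL test, the original question collapses to: is signed random walk encoding strictly more expressive than the Extended WL-test? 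This is precisely Theorem \ref{the:extended}, so Theorem \ref{the:2} follows once that intermediate result is established.

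To prove the intermediate statement, I would argue in two directions. For the ``at least as expressive'' direction, I would show that whenever random walk encoding (with sufficiently long walks) declares two signed graphs isomorphic, the Extended WL-test does too. The key observation is that with enough walk length every $1$-hop positive and negative neighbor of each node is visited, so the multiset of signed random walk distances from a node encodes, in particular, the counts of its $+1$ and $-1$ signed one-hop distances --- exactly the positive/negative neighbor multisets that the Extended WL-test aggregates in its first iteration. Matching random-walk distance multisets between the two graphs therefore forces matching Extended WL-test labels, so the Extended WL-test cannot separate graphs that random walk encoding merges. For strictness, I would exhibit a concrete pair of signed graphs (the pair in Figure \ref{fig:enter-label}) on which the Extended WL-test fails but random walk encoding succeeds: in both graphs every node has identical positive/negative neighbor counts within any fixed hop radius, so the Extended WL-test labels coincide, yet a signed non-backtracking walk can revisit a node in three steps in one graph but never in the other, producing a distinguishing distance multiset.

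The main obstacle I anticipate is the reduction step rather than the counterexample. Establishing that the SEG-WL framework of \cite{zhu2023structural} applies to the present architecture --- with centrality, adjacency, and signed random walk encodings folded into the attention bias of Equation \ref{eq:attention} --- and that the Extended WL-test is genuinely a restriction of the induced SEG-WL test, requires checking that the attention-based update preserves exactly the distinguishing information the corresponding WL refinement tracks. A secondary subtlety in the first case is making precise the phrase ``a sufficient number of random walk length'': I would need to argue that finitely many non-backtracking walks of bounded length recover every signed one-hop distance, so that the random-walk distance multiset is well-defined and stable enough to support the matching argument. Once these bookkeeping points are pinned down, the two-case argument yields strict dominance of signed random walk encoding over the Extended WL-test, and hence of the Transformer over SGCN, completing the proof.
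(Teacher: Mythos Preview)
Your proposal is correct and follows essentially the same route as the paper: reduce the Transformer-vs-SGCN comparison to signed random walk encoding vs.\ the Extended WL-test via the SEG-WL framework of \cite{zhu2023structural}, then establish the latter (Theorem~\ref{the:extended}) by the same two-case argument --- matching $\pm 1$ one-hop distance counts for the ``at least as expressive'' direction, and the Figure~\ref{fig:enter-label} pair (distinguishable by a three-step revisit) for strictness. The obstacles you flag about making the SEG-WL reduction precise are real, but the paper does not address them either; it simply asserts the reduction and proceeds.
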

The theorem \ref{the:2} follows.

\subsection{Signed Random Walk with Restart (SRWR)}
The SRWR model can provide personalized trust or distrust rankings reflecting signed edges based on balance theory. It introduces a 
signed random surfer for bipartite signed graph. The surfer randomly surfs
between nodes and traverses their relationships. Initially, the surfer carries a positive sign $+$ at node s. Suppose the surfer is currently at node u and c is the restart probability of the surfer. The surfer can randomly moves to one of the neighbors from node u with probability 1$-$c or restart with probability c.  When the random surfer encounters a negative edge, she changes her sign from positive to negative, or vice versa. Otherwise, she keeps her sign.
$\mathbf{r}_u^+$ is the probability that the positive surfer
is at node u after SRWR from the seed node s and $\mathbf{r}_u^-$ is the probability that the negative surfer is at node u after SRWR from the seed node s.
$\mathbf{r}_u^+$ and $\mathbf{r}_u^-$ are defined similarly as follows:
\begin{equation}
    \begin{aligned}
&\mathbf{r}_{u}^{+} =(1-c)\left(\sum_{v\in{\overleftarrow{\mathbf{N}}}_{u}^{+}}\frac{\mathbf{r}_{v}^{+}}{|{\overrightarrow{\mathbf{N}}}_{v}|}+\sum_{v\in{\overleftarrow{\mathbf{N}}}_{u}^{-}}\frac{\mathbf{r}_{v}^{-}}{|{\overrightarrow{\mathbf{N}}}_{v}|}\right)+c\mathbf{1}(u=s) \\
&\mathbf{r}_{u}^{-} =(1-c)\left(\sum_{v\in\mathbf{\overleftarrow{N}}_{u}^{-}}\frac{\mathbf{r}_{v}^{+}}{|\overrightarrow{\mathbf{N}}_{v}|}+\sum_{v\in\mathbf{\overleftarrow{N}}_{u}^{+}}\frac{\mathbf{r}_{v}^{-}}{|\overrightarrow{\mathbf{N}}_{v}|}\right) 
\end{aligned}
\end{equation}
where $\overrightarrow{\mathbf{N}}_i$ is the set of out-neighbors of node i,  $\overleftarrow{\mathbf{N}}_i$ is the set of in-neighbors of node i and $c\mathbf{1}(u=s)$ is the restarting score. 
Let $|A|$ be the absolute adjacency matrix of A, and D is the out-degree
diagonal matrix of $|A|$. Then semirow normalized matrix of A is $\tilde{\mathbf{A}}=\mathbf{D}^{-1}\mathbf{A}$. The positive semi-row normalized matrix $\tilde{\mathbf{A}}_+$ contains only positive values in the semi-row normalized matrix $\tilde{\mathbf{A}}$. The negative semi-row normalized matrix $\tilde{\mathbf{A}}_-$ contains absolute values of negative elements in $\tilde{\mathbf{A}}$. In other words, $\tilde{\mathbf{A}}=\tilde{\mathbf{A}}_+-\tilde{\mathbf{A}}_-$. Then Equation (1) is represented as the following equation:
\begin{equation}
    \begin{aligned}&\mathbf{r}^{+}=(1-c)\left(\mathbf{\tilde{A}}_{+}^{\top}\mathbf{r}^{+}+\mathbf{\tilde{A}}_{-}^{\top}\mathbf{r}^{-}\right)+c\mathbf{q}\\&\mathbf{r}^{-}=(1-c)\left(\mathbf{\tilde{A}}_{-}^{\top}\mathbf{r}^{+}+\mathbf{\tilde{A}}_{+}^{\top}\mathbf{r}^{-}\right)\end{aligned}
\end{equation}
where $\mathbf{q}$ is a vector whose s-th element is 1, and all other
elements are 0.
However, balance theory cannot completely explain the relationships in real-world signed graphs, where situations like "a friend of a friend is an enemy" can also occur. To reflect the uncertainty of the friendship of an enemy, the ranking model introduces balance attenuation factors $\beta$ and $\gamma$. $\beta$ is a parameter for the uncertainty of ”the enemy of my enemy is my friend”, and $\gamma$ is for ”the friend of my enemy is my enemy”. So The Equation(2) is further described as: 
\begin{equation}
\begin{aligned}
    \mathbf{r}^{+} &= (1-c)\left(\mathbf{\tilde{A}}_{+}^{\top}\mathbf{r}^{+}+\beta\mathbf{\tilde{A}}_{-}^{\top}\mathbf{r}^{-}+(1-\gamma)\mathbf{\tilde{A}}_{+}^{\top}\mathbf{r}^{-}\right) + c\mathbf{q}, \\
    \mathbf{r}^{-} &= (1-c)\left(\tilde{\mathbf{A}}_{-}^{\top}\mathbf{r}^{+}+\gamma\tilde{\mathbf{A}}_{+}^{\top}\mathbf{r}^{-}+(1-\beta)\tilde{\mathbf{A}}_{-}^{\top}\mathbf{r}^{-}\right)
\end{aligned}
\end{equation}
Then $\mathbf{r}^d=\mathbf{r}^+-\mathbf{r}^-$ is considered as a relative trustworthiness vector of nodes, which is used as a personalized ranking.

For every node, the SRWR alogorithm can return a $\mathbf{r}^{+}$ and a $\mathbf{r}^{-}$. We combine all $\mathbf{r}^{+}$ and $\mathbf{r}^{-}$ into a positive matrix $\mathbf{r}_{\mathbf{p}}\in\mathbb{R}^{n\times n}$ and a negative matrix $\mathbf{r}_{\mathbf{n}}\in\mathbb{R}^{n\times n}$ respectively. However, this method has some issues. The calculated $\mathbf{r}_{\mathbf{p}}(u,v)$(or $\mathbf{r}_{\mathbf{n}}(u,v)$) may differ from the calculated $\mathbf{r}_{\mathbf{p}}(v,u)$(or $\mathbf{r}_{\mathbf{n}}(v,u)$). To solve the question, for $\mathbf{r}_{\mathbf{p}}$, we transpose $\mathbf{r}_{\mathbf{p}}$ to $\mathbf{r}_{\mathbf{p}}T$ where $\mathbf{r_{p}}^{T}(u,v) = \mathbf{r_{p}}(v,u)$. 
We take the maximum value between $\mathbf{r_{p}}^{T}(u,v)$ and $\mathbf{r_{p}}(u,v)$ as $\mathbf{r}_{\mathbf{P}_{\mathrm{max}}}$ and $\mathbf{r}_{\mathbf{n}}$ is the same as $\mathbf{r}_{\mathbf{p}}$. Then we calculate $\mathbf{r}_{\mathbf{p}_{\max}}-\mathbf{r}_{\mathbf{n}_{\max}}$ for each pair of nodes to form a symmetric matrix $\mathbf{r}_{\mathbf{d}}$. Finally, we set thresholds $p$ and $n$ to evaluate the positive and negative scores for each pair of nodes in this matrix. A pair of nodes is considered a positive relationship if the score is positive and greater than or equal to $p$, and a negative relationship if the score is negative and less than or equal to $n$.
After performing these diffusion operations, we obtain a diffusion matrix denoted as $S $, which adds the positive and negative relationships between nodes and their multi-hop neighbors to A. $S_{ij} = 1$ indicates a positive edge between node $i$ and node $j$, while $ S_{ij} = -1$ indicates a negative edge between them and $S_{ij}=0 $ signifies no relationship between the two nodes.

\subsection{Loss Function}
\begin{align}\label{eq:loss}
&\mathcal{L}(\theta^{W},\theta^{MLG}) = \nonumber \\
&- \frac{1}{\mathcal{M}}  \sum\limits_{(u_i,u_j,s) \in \mathcal{M}} \omega_s \log \frac{\exp{([{\bf z}_i,{\bf z}_j]\theta^{MLG}_s)}}{\sum\limits_{q \in \{+,-,?\}} \exp{([{\bf z}_i,{\bf z}_j]\theta^{MLG}_q )}} \nonumber \\
&+ \lambda \Bigg[ \frac{1}{|\mathcal{M}_{(+,?)}|} \sum\limits_{\substack{(u_i,u_j,u_k) \\ \in \mathcal{M}_{(+,?)}}} \max\Big(0,(|| {\bf z}_i - {\bf z}_j||^2_2 - || {\bf z}_i - {\bf z}_k||^2_2) \Big) \nonumber \\
&  + \frac{1}{|\mathcal{M}_{(-,?)}|} \sum\limits_{\substack{(u_i,u_j,u_k) \\ \in \mathcal{M}_{(-,?)}}} \max\Big(0,(|| {\bf z}_i - {\bf z}_k||^2_2 - || {\bf z}_i - {\bf z}_j||^2_2) \Big) \Bigg] \nonumber \\
&+ Reg(\theta^{W},\theta^{MLG})
\end{align}

\subsection{Datasets}
To evaluate the link sign prediction accuracy and interpretability of our model, we conducted extensive experiments on real-world datasets.
In our experiments, we utilized a diverse set of real-world datasets that include both positive and negative feedback. These datasets are essential for studying various aspects of user behavior, trust networks, and community interactions. Below is a detailed description of each dataset used:

\textbf{Amazon-Music and Epinions} These widely used datasets contain user ratings on items from the Amazon and Epinions platforms. High ratings (greater than 3.5) are considered positive feedback, while lower ratings are deemed negative. Epinions, a popular product review website, allows users to categorize others into trusted and distrusted, represented as positive and negative links, respectively.

\textbf{KuaiRec and KuaiRand} These datasets record user behavior within the KuaiApp. KuaiRec focuses on a dense dataset where feedback classification is based on the ratio of user viewing duration to total video duration. Ratios equal to or exceeding 4 are positive, while those below 0.1 are negative. KuaiRand uses the “is\_click” attribute to classify feedback, as suggested by previous studies. Both datasets are split into training and testing sets in a ratio of 8:2.

\textbf{Bitcoin-Alpha and Bitcoin-OTC} These datasets come from sites where users buy and sell using Bitcoins. Users have created trust networks to rate others they trust (positive) or distrust (negative) to mitigate scams. These datasets provide insights into anonymous user interactions and trust dynamics.

\textbf{WikiElec and WikiRfa} These datasets include historical data on Wikipedia administrator elections and requests for adminship (RFA), respectively. They provide insights into community decision-making processes and user interactions, valuable for tasks such as symbolic graph prediction and sentiment analysis.

\begin{table}[h]
\centering
\begin{tabular}{lcccc}
\toprule
Dataset & \# Nodes & \# Pos & \# Neg & Pos/Neg \\ 
\midrule
WikiElec & 7,118 & 81,345 & 22,344 & 1:0.27 \\ 
KuaiRec & 3,327 & 40,216 & 239,165 & 1:5.95 \\ 
WikiRfa & 11,017 & 133,330 & 37,005 & 1:0.28 \\ 
Epinions & 17,894 & 301,378 & 112,396 & 1:0.37 \\ 
KuaiRand & 16,974 & 128,346 & 161,063 & 1:1.26 \\ 
Bitcoin-OTC & 6,006 & 32,028 & 3,563 & 1:0.11 \\ 
Amazon-Music & 3,472 & 40,043 & 9,832 & 1:0.25 \\ 
Bitcoin-Alpha & 7,605 & 22,649 & 1,536 & 1:0.07 \\ 
\bottomrule
\end{tabular}
\caption{Summary of datasets with nodes, positive and negative edges, and their ratio.}
\label{tab:dataset_summary}
\end{table}

\begin{figure}
    \centering
    \includegraphics[width=1\linewidth]{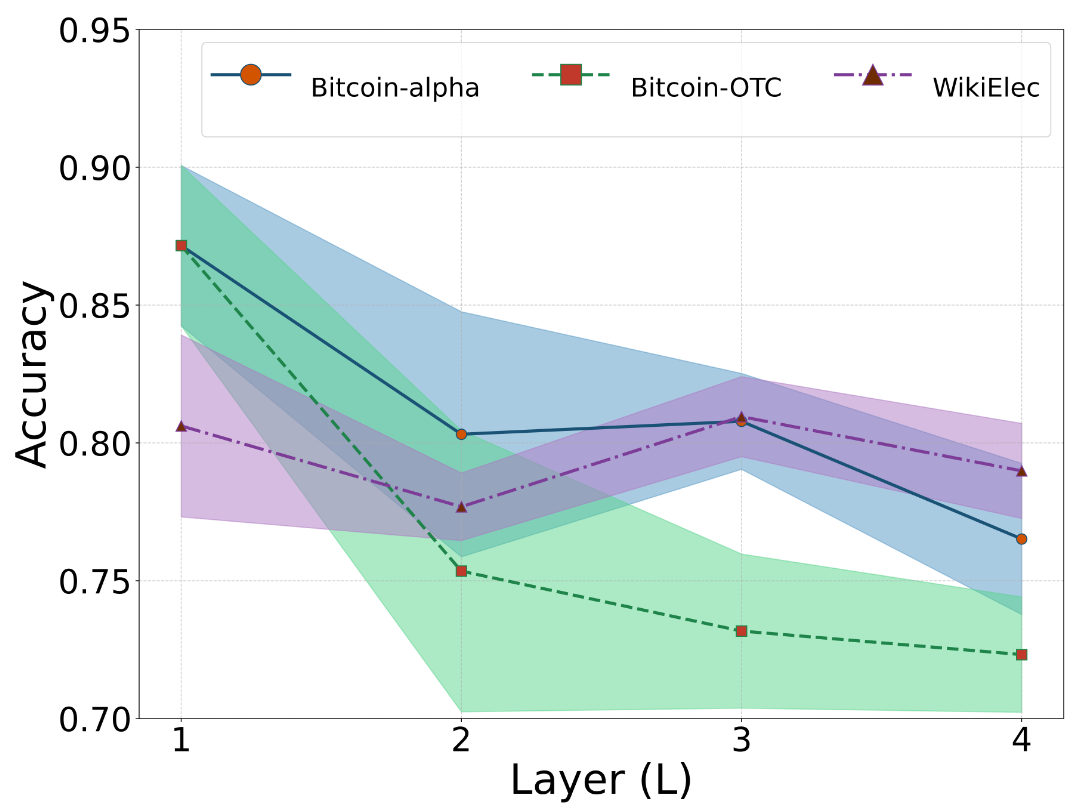}
    \caption{The sensitivity of parameter $L$}
    \label{fig:layer}
\end{figure}

\textbf{Explanation Information}. we enhance existing real-world datasets by adding explanation information. The construction process is as follows: we first load node embeddings and edge information. For each edge, we compute the distance between connected nodes and determine the k-nearest and k-farthest neighbors for each node. Then, we generate ground\_pos\_neighbors and ground\_neg\_neighbors, representing the sets of positive and negative neighbors, respectively. Specifically, for each node, we calculate the Euclidean distance to all other nodes and sort them by distance. For the positive neighbor set, we select the k nearest neighbors, and for the negative neighbor set, we select the k farthest neighbors. These neighbor sets are used as ground truth explanations to interpret the predictions of edges. By comparing the predicted neighbor sets with the ground truth neighbor sets, we can measure the accuracy of the explanations.
Using this approach, we can get datasets with explanation information. This dataset provides a controlled environment for quantitatively evaluating the effectiveness of different explanation methods.

\subsection{Baselines}
We compare the proposed framework with representative and state-of-the-art methods for link prediction, which include:

\textbf{GCN} \cite{kipf2016semi} is an initial and representative GNN model designed for unsigned graphs which employs an efficient layer-wise propagation rule.

\textbf{GAT} \cite{velivckovic2017graph} adopts an attention-based architecture which can learn different weights to neighbors. It is also designed for unsigned graphs.

\textbf{SGCN} \cite{derr2018signed} generalizes GCN to signed graphs by designing a new information aggregator which is based on balance theory.

\textbf{SNEA} \cite{li2020learning} generalizes GAT to signed graphs which adopts attention-based aggregators in the message passing mechanism and is also based on the balance theory.

\textbf{SGCL} \cite{shu2021sgcl} generalizes graph contrastive learning to signed graphs, which employs graph augmentations to reduce the harm of interaction noise and enhance the model robustness.

\textbf{SIGformer} \cite{chen2024sigformer} is an advanced method that utilizes the Transformer architecture for sign-aware graph-based recommendation. It introduces two novel positional encodings that effectively capture the spectral characteristics and path patterns of signed graphs, allowing for comprehensive utilization of the entire graph.

\subsection{Sensitivity analysis of parameter $L$} In Figure \ref{fig:layer} $L$ is varied from 1 to 4, revealing that increasing the number of layers generally reduces the model's mean performance, particularly for the BitcoinOTC dataset, suggesting that deeper models may lead to overfitting or added complexity.

\end{document}